\documentclass[11pt]{article}

\usepackage[margin=1in]{geometry}

\usepackage{amsmath}
\usepackage{amssymb}
\usepackage{amsthm}
\usepackage{mathtools}

\usepackage{booktabs}
\usepackage{array}

\usepackage{tikz-cd}
\usetikzlibrary{positioning, calc, patterns, decorations.pathreplacing}

\usepackage[colorlinks=true,linkcolor=blue,citecolor=blue,urlcolor=blue]{hyperref}

\usepackage{natbib}

\usepackage{listings}
\usepackage{xcolor}

\usepackage{enumitem}
\usepackage{microtype}

\newtheorem{theorem}{Theorem}[section]

\newtheorem{corollary}[theorem]{Corollary}
\newtheorem{lemma}[theorem]{Lemma}
\theoremstyle{definition}
\newtheorem{definition}[theorem]{Definition}
\theoremstyle{remark}

\definecolor{codebg}{rgb}{0.95,0.95,0.95}
\definecolor{codegreen}{rgb}{0,0.6,0}
\definecolor{codepurple}{rgb}{0.58,0,0.82}
\definecolor{codeblue}{rgb}{0.0,0.0,0.7}

\lstdefinestyle{coq}{
  backgroundcolor=\color{codebg},
  basicstyle=\ttfamily\small,
  breaklines=true,
  frame=single,
  framerule=0pt,
  xleftmargin=1em,
  xrightmargin=1em,
  aboveskip=0.5em,
  belowskip=0.5em,
  keywordstyle=\color{codeblue}\bfseries,
  commentstyle=\color{gray}\itshape,
  stringstyle=\color{codegreen},
  morekeywords={Definition,Theorem,Lemma,Corollary,Proof,Qed,
    Inductive,Record,Fixpoint,CoFixpoint,Section,End,Variable,
    Hypothesis,forall,exists,fun,match,with,end,if,then,else,
    let,in,return,Type,Prop,Set,Variant,Context,Notation},
  morecomment=[s]{(*}{*)},
  morestring=[b]",
  sensitive=true,
  showstringspaces=false,
}

\lstdefinestyle{ocaml}{
  backgroundcolor=\color{codebg},
  basicstyle=\ttfamily\small,
  breaklines=true,
  frame=single,
  framerule=0pt,
  xleftmargin=1em,
  xrightmargin=1em,
  aboveskip=0.5em,
  belowskip=0.5em,
  keywordstyle=\color{codeblue}\bfseries,
  commentstyle=\color{gray}\itshape,
  morekeywords={let,in,match,with,type,module,sig,struct,end,
    val,fun,function,if,then,else,open,rec,and,of,true,false},
  morecomment=[s]{(*}{*)},
  sensitive=true,
  showstringspaces=false,
}

\newcommand{\Mashin}{\textbf{Mashin}}
\newcommand{\CatM}{\mathbf{Mashin}}
\newcommand{\Gov}{\mathcal{G}}
\newcommand{\code}{\textsc{code}}
\newcommand{\reason}{\textsc{reason}}
\newcommand{\memory}{\textsc{memory}}
\newcommand{\mcall}{\textsc{call}}
\newcommand{\govsafe}{\texttt{gov\_safe}}
\newcommand{\withincaps}{\texttt{within\_caps}}
\newcommand{\itree}{\mathrm{itree}}
\newcommand{\DirectiveE}{\mathrm{DirectiveE}}
\newcommand{\IOE}{\mathrm{IOE}}
\newcommand{\GovIOE}{\mathrm{GovIOE}}
\newcommand{\GovE}{\mathrm{GovE}}
\newcommand{\Ret}{\mathrm{Ret}}
\newcommand{\Tau}{\mathrm{Tau}}
\newcommand{\Vis}{\mathrm{Vis}}
\newcommand{\bind}{\mathbin{>\!\!>\!\!=}}
\newcommand{\interp}{\mathrm{interp}}

\newcommand{\CapSet}{\mathrm{CapSet}}
\newcommand{\CapMorphism}{\mathrm{CapMorphism}}

\begin{document}

\title{Algebraic Semantics of Governed Execution:\\
Monoidal Categories, Effect Algebras, and Coterminous Boundaries}

\author{Alan L. McCann\\
\textit{Mashin, Inc.}\\
\texttt{research@mashin.live}}

\date{April 2026}

\maketitle

\begin{abstract}
We present an algebraic semantics for governed execution in which
governance is axiomatized, compositional, and coterminous with
expressibility. The framework, mechanized in 32~Rocq modules
(~12,000~lines, 454~theorems, 0~admitted), is built on interaction
trees and parameterized coinduction.

A three-axiom \emph{GovernanceAlgebra} record (safety,
transparency, properness) induces a symmetric monoidal category
with verified pentagon, triangle, and hexagon coherence, where
every tensor composition preserves governance. An algebraic
effect system constrains the handler algebra so that only
governance-preserving handlers can be constructed in the safe
fragment; programs in the empty capability set provably emit
only observability directives. Capability-indexed composition
bundles programs with machine-checked capability bounds, and
a dual guarantee theorem establishes that \texttt{within\_caps}
and \texttt{gov\_safe} hold simultaneously under all composition
operators.

The capstone result is the \emph{coterminous boundary}: within
our formal model, every program expressible via the four
primitive morphism constructors is governed under interpretation,
and every governed program is the image of such a program.
Turing completeness is preserved inside governance; unmediated
I/O is excluded from the governed fragment. Governance denial
is modeled as safe coinductive divergence.

The governance algebra is parametric: any system instantiating
the three axioms inherits all derived properties, including
convergence, compositional closure, and goal preservation.
Extracted OCaml runs as a NIF in the BEAM runtime,
with property-based testing (70,000+~random inputs, zero
disagreements) confirming behavioral equivalence between the
specification and the runtime interpreter.
\end{abstract}

\section{Introduction}
\label{sec:intro}

AI workflow systems compose language model calls with data
retrieval, code execution, and external API access. Governance
of these systems currently takes two forms. Behavioral
approaches (RLHF~\cite{ouyang2022training}, constitutional
AI~\cite{bai2022constitutional}, guardrails~\cite{rebedea2023nemo})
train or filter model outputs but do not address whether the
system's \emph{effects} (API calls, file writes, tool invocations)
pass through any governance boundary. Structural approaches enforce
invariants at the execution layer, but existing formalizations
either lack machine-checked proofs or do not address composition:
they verify individual programs rather than establishing that
governance is closed under the system's composition operators.

This paper develops an algebraic semantics for \emph{governed
execution}: a framework in which governance is axiomatized as a
three-property algebra, compositional under all program combinators,
and, within the formalized execution model, coterminous with
expressibility. The central thesis is:

\begin{quote}
\emph{Governance, when formalized as an algebraic structure over
interaction trees, induces compositional properties on programs,
effects, and capabilities such that the governed fragment coincides
with the expressible fragment in the modeled substrate.}
\end{quote}

\paragraph{Relationship to prior work in this line.}
Prior papers in this line establish structural governance at the
architectural level~\cite{mccann2026structural}, expressive
completeness via four primitives~\cite{mccann2026gcc}, and
machine-checked safety proofs for the specific
runtime~\cite{mccann2026mechanized}. The present paper abstracts
and unifies those system-specific results into an algebraic
semantics. The GovernanceAlgebra axiomatization lifts safety theorems
to parametric results that hold for \emph{any} system satisfying
three axioms. The monoidal, effect, and capability layers explain
why the system's programs compose under governance, not just why
individual programs are safe.
Two companion papers address practical enforcement:
\cite{mccann2026purity}~discharges the pure module constraint
assumed by the safety theorems, replacing convention-based
enforcement with WASM compilation and cryptographic purity
certificates; \cite{mccann2026provenance}~extends the governance
boundary to the supply chain with dual-signature distribution
provenance.

\paragraph{Three levels of ``governed.''}
We distinguish three levels that the paper addresses:
\begin{enumerate}[nosep]
  \item \textbf{Syntactic expressibility}: a program is constructible
    from the four primitive morphisms (\code, \reason, \memory, \mcall)
    via composition in category $\CatM$.
  \item \textbf{Semantic governance under interpretation}: interpreting
    a program through the governance operator $\Gov$ yields a tree
    satisfying the coinductive \govsafe{} predicate.
  \item \textbf{Trace-level evidence}: the execution trace extracted
    from a governed program is well-governed (every I/O event preceded
    by a governance check) and maps to a tamper-evident ledger.
\end{enumerate}
The coterminous boundary result connects levels 1 and 2:
every syntactically expressible program is semantically governed
under interpretation. Level 3 is a derived consequence
(Section~\ref{sec:traces}).

\paragraph{Contributions.}
The paper makes four contributions, each mechanized in Rocq:

\begin{enumerate}
  \item \textbf{GovernanceAlgebra and monoidal structure}
    (Sections~\ref{sec:algebra}--\ref{sec:monoidal}).
    A three-axiom parametric record from which convergence,
    compositional closure, and goal preservation are derived.
    Category~$\CatM$ admits a symmetric monoidal structure with
    verified pentagon, triangle, and hexagon coherence. All
    tensor compositions preserve governance.

  \item \textbf{Governed algebraic effects}
    (Section~\ref{sec:effects}).
    An algebraic effect system in which the handler algebra is
    constrained: only handlers carrying a machine-checked safety
    certificate (GovernedHandler) can be constructed in the governed
    fragment. The No Ambient Effects theorem proves that programs
    in the empty capability set emit only observability directives.

  \item \textbf{Capability-indexed composition with dual guarantee}
    (Section~\ref{sec:capability}).
    CapMorphisms bundle programs with capability bounds verified
    by a trust lattice. Composition operators (sequential, tensor,
    branch) preserve bounds. The dual guarantee establishes that
    \withincaps{} and \govsafe{} hold simultaneously.

  \item \textbf{Coterminous boundary}
    (Section~\ref{sec:coterminous}).
    Within the formalized execution model, every expressible program
    is governed, and every governed program is the image of an
    expressible program. Turing completeness is preserved inside
    governance. Unmediated I/O is excluded from the governed fragment.
\end{enumerate}

\noindent
Section~\ref{sec:traces} derives trace semantics and ledger
connection as consequences of the algebraic framework.
Section~\ref{sec:mechanization} discusses
mechanization and extraction to OCaml.
Section~\ref{sec:related} positions the work relative to prior
results on algebraic effects, monoidal categories, and
verified systems.

\paragraph{What is novel.}
The individual ingredients (monoidal categories, algebraic effects,
coinductive predicates, capability systems) are well-established.
To our knowledge, this is the first mechanized framework combining
governance-constrained interpretation, capability-bounded algebraic
effects, and a coterminous expressibility/governance theorem over an
interaction-tree-based execution model. The GovernanceAlgebra
axiomatization is parametric: the derived theorems hold for any
system satisfying three axioms, not only the concrete system we
instantiate.

\paragraph{The Interaction Trees framework.}
All formalizations use the Interaction Trees library~\cite{xia2020itrees},
which represents programs as coinductive trees with three node types:
pure values ($\Ret$), silent steps ($\Tau$), and visible events ($\Vis$).
The governance pipeline transforms trees of directive events ($\DirectiveE$)
into trees of governed events ($\GovIOE$). Coinductive properties are
proved using the paco library~\cite{hur2013paco} for parameterized
coinduction.

\section{Background}
\label{sec:background}

\subsection{Interaction Trees}

An interaction tree $\itree\ E\ R$ is a potentially infinite tree
coinductively defined by three constructors:
\begin{align*}
  \Ret(r) &\quad \text{pure value of type } R \\
  \Tau(t) &\quad \text{silent computation step} \\
  \Vis(e, k) &\quad \text{event } e : E\;X, \text{ continuation } k : X \to \itree\ E\ R
\end{align*}

Interaction trees form a monad: $\mathrm{ret}$ injects values and
$\bind$ sequences computations. Interpretation via handlers
$h : \forall X.\; E\;X \to \itree\;F\;X$
transforms $\itree\;E\;R$ into $\itree\;F\;R$.
Equivalence is \emph{eutt} (equivalence up to taus): two trees are
equivalent if they produce the same visible events and return values,
ignoring silent steps.

\subsection{The Directive Type}

The system's effect signature is $\DirectiveE$, an inductive type
with 14 constructors covering all external capabilities:

\begin{lstlisting}[style=coq]
Inductive DirectiveE : Type -> Type :=
  | LLMCall : LLMCallParams -> DirectiveE LLMResponse
  | HTTPRequest : HTTPRequestParams -> DirectiveE HTTPResponse
  | FileOp : FileOpParams -> DirectiveE FileResult
  | CallMachine : CallMachineParams -> DirectiveE CallMachineResult
  | MemoryOp : MemoryOpParams -> DirectiveE MemoryResult
  | DBOp : DBOpParams -> DirectiveE DBResult
  | ExecOp : ExecOpParams -> DirectiveE ExecResult
  | RecordStep : RecordStepParams -> DirectiveE unit
  | Broadcast : BroadcastParams -> DirectiveE unit
  | EmitEvent : EmitEventParams -> DirectiveE unit
  | GraphQLRequest : GraphQLRequestParams -> DirectiveE HTTPResponse
  | WebSocketOp : WebSocketOpParams -> DirectiveE WebSocketResult
  | MCPCall : MCPCallParams -> DirectiveE CallMachineResult
  | Observability : string -> DirectiveE unit.
\end{lstlisting}

The governed event type layers governance checks over I/O:
$\GovIOE = \GovE + \IOE$, where $\GovE$ carries governance stage checks
and $\IOE$ carries actual I/O events.

\subsection{The Governance Operator}

The operator $\Gov : \mathrm{base\_handler} \to \mathrm{governed\_handler}$
wraps a base handler with governance checks. Before each I/O operation,
$\Gov$ emits a $\mathrm{GovCheck}$ event; only if governance approves
does the handler proceed. The key property is the coinductive safety
predicate $\govsafe$:

\begin{definition}[gov\_safe]
\label{def:gov-safe}
$\govsafe(a, t)$ holds coinductively for a governed tree
$t : \itree\ \GovIOE\ R$ with permission flag $a : \mathrm{bool}$ if:
\begin{itemize}
  \item $\Ret(r)$: always safe.
  \item $\Tau(t')$: safe if $t'$ is safe.
  \item $\Vis(\mathrm{inl}_1(\mathrm{GovCheck}(s)), k)$: safe if both
    branches ($k(\mathrm{true})$ with $a := \mathrm{true}$, $k(\mathrm{false})$
    with $a := \mathrm{false}$) are safe.
  \item $\Vis(\mathrm{inr}_1(e), k)$: safe only if $a = \mathrm{true}$
    (governance has approved) and $\forall x.\; \govsafe(\mathrm{false}, k(x))$.
\end{itemize}
\end{definition}

The fundamental safety theorem, proved in prior work~\cite{mccann2026mechanized}:
\begin{theorem}[governed\_interp\_safe]
For any base handler $h$ and program $t : \itree\ \DirectiveE\ R$,
$\govsafe(\mathrm{false}, \interp(\Gov(h), t))$.
\end{theorem}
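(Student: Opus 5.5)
The proof goes by parameterized coinduction (paco) on $\govsafe$, generalized to an invariant that holds at every intermediate state of $\interp(\Gov(h), t)$. The plan rests on the definition of $\Gov$: for each directive $e$, $\Gov(h)(e)$ unfolds to $\Vis(\mathrm{inl}_1(\mathrm{GovCheck}(s)), k_g)$. On $k_g(\mathrm{true})$ the continuation lifts $h(e)$ into $\IOE$. On $k_g(\mathrm{false})$ it denies, and the denial is a silent divergence (spin), which is safe coinductively. The Observability directive is handled the same way, or as a pure Ret if that is how $\Gov$ treats it.

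\textbf{Step 1 (handler safety).} First I show, for every event $e$ and every flag $a$, that $\govsafe(a, \Gov(h)(e))$ holds in a form that permits sequencing.
\begin{itemize}
\item The opening $\Vis(\mathrm{inl}_1(\mathrm{GovCheck}(s)),\cdot)$ node splits into the true and false branches.
\item In the true branch, every I/O node must occur with $a=\mathrm{true}$. This requires the lifted base handler to emit at most one I/O event per directive, or a GovCheck before each one. If $h(e)$ can emit several I/O events, I would restrict the lifted base handler to a single $\Vis(\mathrm{inr}_1 e',\mathrm{Ret})$, or have $\Gov$ re-check before each event.
\item The false branch spins, and a Tau-loop is safe by a one-line cofix.
\end{itemize}

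\textbf{Step 2 (bind lemma).} Next I prove an up-to-bind closure: if $\govsafe(a, u)$ holds and, for every $r$, $\govsafe(\mathrm{false}, k(r))$ holds, then $\govsafe(a, u \bind k)$ holds. The proof is by paco on the bind. The one subtle point is the flag across the Ret boundary: after $u$ returns, the flag must be reset to false. I would therefore state safety of the continuation at $\mathrm{false}$ and add a monotonicity lemma: $\govsafe(\mathrm{false}, t)$ implies $\govsafe(a, t)$ for any $a$. That lemma is itself a small coinduction, since a larger flag only relaxes the $\mathrm{inr}_1$ case.

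\textbf{Step 3 (main coinduction).} Finally I unfold $\interp(\Gov(h), t)$ by the standard ITrees equations and case on $t$:
\begin{itemize}
\item $\Ret$ gives $\Ret$, which is safe.
\item $\Tau(t')$ gives $\Tau(\interp\ldots t')$; I apply the coinductive hypothesis.
\item $\Vis(e,k)$ gives $\Gov(h)(e) \bind (\lambda x.\, \Tau(\interp(\Gov(h), k\,x)))$. Here I use Step 2 with Step 1 for the prefix and the coinductive hypothesis, guarded by the Tau, for the continuation.
\end{itemize}
Because $\govsafe$ is stated up to the tree structure rather than up to eutt, I need either the unfolding lemmas to hold as strong bisimulation, or to prove that $\govsafe$ respects eq\_itree and rewrite with that.

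The main obstacle is Step 2 combined with guardedness. Mixing a bind-closure argument with the outer coinduction in paco needs an up-to-bind companion, or a generalized invariant proved inside a single cofix: ``every tree of the form $u \bind (\lambda x.\,\Tau(\interp\ldots))$ with $u$ a suffix of a handler tree is safe.'' I would try the generalized-invariant approach first, characterizing the handler suffixes explicitly, since the handler trees are finite apart from the spin.
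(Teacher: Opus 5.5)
Your proposal is correct and follows essentially the route the paper relies on. The paper defers this proof to prior work, but the ingredients it names are yours: the paco2 formulation of \govsafe{}, a GovCheck emitted before each I/O operation (which the paper asserts of $\Gov$, so you need not restrict $h$), and spin-based denial as in \texttt{gov\_denial\_is\_conservative}.

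One simplification: prove the flag-generic invariant $\forall a\,u\,k.\;\govsafe(a,u)\Rightarrow\govsafe(a,\,u\bind\lambda x.\,\Tau(\interp(\Gov(h),k\,x)))$ in a single cofix, mirroring G1, which is stated for all $a$. The coinductive hypothesis then applies at whatever flag the handler prefix returns with, so you need neither the flag-monotonicity lemma nor finiteness of handler suffixes. The theorem itself is the instance $u=\Ret(\mathrm{tt})$ after stripping one $\Tau$.
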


\subsection{Category $\CatM$}

The category $\CatM$ has types as objects and Kleisli arrows
$A \to \itree\ \DirectiveE\ B$ as morphisms. Identity is $\mathrm{ret}$,
composition is monadic bind ($\bind$). Four primitive morphism constructors
generate all programs:

\begin{itemize}
  \item $\code(f) : A \to B$ -- pure computation, $\mathrm{ret}(f(a))$
  \item $\reason(\mathrm{build}, \mathrm{extract}) : A \to B$ -- LLM inference
  \item $\memory(\mathrm{build}, \mathrm{extract}) : A \to B$ -- semantic storage
  \item $\mcall(\mathrm{build}, \mathrm{extract}) : A \to B$ -- machine invocation
\end{itemize}

Category laws (identity, associativity) hold up to eutt, proved
in \texttt{Category.v}~\cite{mccann2026mechanized}.

\section{Governance Algebra}
\label{sec:algebra}

The central abstraction is a three-axiom record that captures what it
means to be a governance operator.

\begin{definition}[GovernanceAlgebra]
\label{def:gov-algebra}
A \emph{GovernanceAlgebra} is a record
$(\mathrm{ga\_Gov}, \mathrm{ga\_safe}, \mathrm{ga\_transparent}, \mathrm{ga\_proper})$
where:
\begin{itemize}
  \item $\mathrm{ga\_Gov} : \mathrm{base\_handler} \to \mathrm{governed\_handler}$
  \item \textbf{G1 (Safety):} $\forall h, R, a, t.\;
    \govsafe(a, \interp(\mathrm{ga\_Gov}(h), t))$
  \item \textbf{G2 (Transparency):} Under permissive governance
    (all checks pass), governed interpretation is observationally
    equivalent to ungoverned interpretation.
  \item \textbf{G3 (Properness):} Equivalent handlers produce
    equivalent governed handlers (eutt-preservation).
\end{itemize}
\end{definition}

\noindent
The three axioms are independent. Safety does not imply transparency:
an operator could alter permitted results. Transparency does not imply
safety: an operator could be transparent without wrapping I/O with
governance checks. Properness is orthogonal to both.

From these three axioms, we derive:

\begin{theorem}[ga\_convergence]
\label{thm:convergence}
For any GovernanceAlgebra $G$, handler $h$, level $n$, and program
$t : \mathrm{machine\_at\_level}\ n\ R$:
$\govsafe(\mathrm{false}, \interp(\mathrm{ga\_Gov}(G, h), t))$.
\end{theorem}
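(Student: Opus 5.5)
The plan is to reduce the statement directly to axiom G1 of Definition~\ref{def:gov-algebra}. The key observation is that $\mathrm{machine\_at\_level}\ n\ R$ is, by its definition, a type of interaction trees over $\DirectiveE$. It may be a subtype, or the image of a level-indexed embedding built from the four primitive constructors composed to depth $n$. Either way, any $t$ at level $n$ denotes, or coerces to, some $t' : \itree\ \DirectiveE\ R$. So the first step is to unfold $\mathrm{machine\_at\_level}$ and fix this coercion, whether it is definitional equality, a projection, or an explicit injection $\iota_n$. Once that is fixed, $\interp(\mathrm{ga\_Gov}(G,h), t)$ is literally $\interp(\mathrm{ga\_Gov}(G,h), \iota_n(t))$.

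Second, I instantiate G1 with the handler $h$, the result type $R$, the permission flag $a := \mathrm{false}$, and the tree $\iota_n(t)$. G1 is universally quantified over all four, so this gives $\govsafe(\mathrm{false}, \interp(\mathrm{ga\_Gov}(G,h), \iota_n(t)))$ immediately, which is the goal. Neither G2 nor G3 is needed. The name ``convergence'' reflects that safety holds uniformly across all levels $n$, rather than any new coinductive content.

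The only real obstacle is making the coercion behave. The levels may be defined inductively, with level $0$ consisting of pure $\code$ programs and level $n+1$ of programs that $\mcall$ level-$n$ machines. The embedding may then be a recursively defined interpretation rather than an identity. In that case the plan is to avoid any induction on $n$. I would note that $\iota_n(t)$ is still some tree of type $\itree\ \DirectiveE\ R$ and apply G1 to it as a black box. An induction on $n$ would only be required if $\mathrm{machine\_at\_level}$ lived over a different event type, for example nested $\mathrm{CallMachine}$ sub-handlers. In that case I would first inline each level-$(n{+}1)$ call via $\interp$ into a level-$n$ tree, and then apply G1 at the base.

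If the goal instead involves a tree that is only eutt-equal to $\interp(\mathrm{ga\_Gov}(G,h), \iota_n(t))$, I would need a separate lemma. That lemma states that $\govsafe$ respects eutt, and I would prove it by paco coinduction over the four cases of Definition~\ref{def:gov-safe}. I expect this not to be required, since G1 is stated for arbitrary $t$.
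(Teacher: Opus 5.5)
Your proposal is correct and matches the paper's argument. Since $\mathrm{machine\_at\_level}\ n\ R$ is definitionally $\itree\ \DirectiveE\ R$, your coercion $\iota_n$ is the identity, and instantiating G1 at $a := \mathrm{false}$ closes the goal directly with no induction on $n$; your contingency plans (recursive embedding, nested event types, or transporting \govsafe{} along eutt) are not needed here.
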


\noindent
This follows because $\mathrm{machine\_at\_level}\ n\ R$ is
definitionally $\itree\ \DirectiveE\ R$ for all $n$, so G1 applies
directly. Governance holds uniformly across the meta-recursive tower.

\begin{theorem}[ga\_subsumption\_asymmetry]
\label{thm:subsumption}
For any GovernanceAlgebra $G$:
\begin{enumerate}
  \item Any program, including one whose handler incorporates
    content-governance logic (output filtering, alignment constraints),
    is governed when interpreted through $\mathrm{ga\_Gov}$:
    $\forall h, R, t.\; \govsafe(\mathrm{false}, \interp(\mathrm{ga\_Gov}(G, h), t))$
  \item Unmediated I/O (a handler that performs effects without
    governance checks) does not satisfy \govsafe{}:
    $\forall R, e : \IOE\ R, k.\;
    \neg\, \govsafe(\mathrm{false}, \Vis(\mathrm{inr}_1(e), k))$
\end{enumerate}
\end{theorem}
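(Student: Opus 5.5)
Part (1) is a direct instance of axiom G1 of Definition~\ref{def:gov-algebra}. G1 quantifies over every base handler $h$, result type $R$, permission flag $a$ and program $t$. We therefore specialize it to $a := \mathrm{false}$ and leave $h$ arbitrary. A handler that ``incorporates content-governance logic'' is, in particular, just some $h : \mathrm{base\_handler}$, so nothing about its internal filtering needs to be inspected. The formal statement of (1) is literally the $a=\mathrm{false}$ instance of G1. Mechanically this is an \texttt{intros} followed by \texttt{apply} of the \texttt{ga\_safe} field of $G$. This mirrors the argument for Theorem~\ref{thm:convergence}.

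Part (2) is a negative statement about a coinductive predicate, so the plan is to argue by inversion rather than by coinduction. Assume $\govsafe(\mathrm{false}, \Vis(\mathrm{inr}_1(e), k))$. Since \govsafe{} is defined with paco as the greatest fixed point of a monotone generating functor, we first unfold one layer. With \texttt{punfold}, the hypothesis becomes an instance of the generating relation applied to the tree $\Vis(\mathrm{inr}_1(e),k)$ at flag $\mathrm{false}$. Next, we invert on that generating relation. Among the four clauses of Definition~\ref{def:gov-safe}, the $\Ret$ and $\Tau$ clauses are ruled out by constructor disjointness of interaction trees. The GovCheck clause is ruled out because its event has the form $\mathrm{inl}_1(\cdot)$, while ours is $\mathrm{inr}_1(e)$, and the two summands of $\GovE + \IOE$ are disjoint. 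The only remaining clause is the I/O clause, which demands $a = \mathrm{true}$. In our case $a = \mathrm{false}$, and $\mathrm{false} = \mathrm{true}$ is discriminated, which gives the contradiction.

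The main obstacle is the inversion step. Interaction trees are coinductive records observed through \texttt{observe}, and the event index $X$ in $\Vis$ is dependent. Inverting the hypothesis therefore produces equalities such as $\mathrm{VisF}(\mathrm{inr}_1\,e)\,k = \mathrm{VisF}(\mathrm{inl}_1\,e')\,k'$ over a dependent pair, and \texttt{inversion} may leave these as \texttt{existT} equalities. To handle this, I would first state the generating relation in a form indexed by \texttt{observe t}, so that matching is on the \texttt{itreeF} constructors. I would then use \texttt{dependent destruction}, or \texttt{inj\_pair2}, which is available since the types involved have decidable equality or via axiom K from the ITrees library, to expose the flag equality. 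If direct inversion proves unwieldy, the fallback is a small auxiliary lemma. It would state that the generating functor applied to $\mathrm{VisF}(\mathrm{inr}_1\,e)\,k$ at flag $a$ implies $a=\mathrm{true}$, proved by case analysis on the generating relation with an explicit \texttt{remember} of the tree shape. Applying this lemma to the unfolded hypothesis then closes part (2) immediately.
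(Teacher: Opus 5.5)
Your proposal is correct and follows the paper's argument: part (1) is the $a=\mathrm{false}$ instance of G1 (the \texttt{ga\_safe} field), and part (2) is the content of \texttt{bare\_io\_not\_safe}, which the paper cites and which is proved by the unfold-and-invert argument you describe. Your handling of the dependent inversion is sensible, but it only re-derives that existing lemma; in the development you could simply apply it, since it is a property of \govsafe{} alone and does not depend on $G$.
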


\noindent
The positive direction follows from G1. The negative direction follows
from $\texttt{bare\_io\_not\_safe}$: bare I/O nodes in the governed
tree violate \govsafe{} because no governance check has approved them.
This is a property of the \govsafe{} predicate, independent of any
particular operator. The asymmetry is a theorem about the formal model:
structural governance (effect mediation via $\Gov$) provides a guarantee
that output-oriented controls alone cannot replicate within this
framework, because output controls do not address the governance-check
structure of the interaction tree.

\begin{theorem}[ga\_goal\_preservation]
\label{thm:goal-preservation}
For any GovernanceAlgebra $G$, if a program reaches value $v$ under
ungoverned interpretation, it reaches the same value under permissive
governed interpretation.
\end{theorem}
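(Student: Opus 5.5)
The plan is to derive the result directly from axiom G2 (Transparency), using only that eutt preserves the ``reaches value $v$'' relation. First we make the statement precise. Let $\mathrm{perm}$ denote the permissive governance oracle, which answers $\mathrm{true}$ to every $\mathrm{GovCheck}$. Let $\mathrm{run}_{\mathrm{perm}}$ be the interpretation that discharges $\GovE$ events via $\mathrm{perm}$ and leaves $\IOE$ events visible. ``Reaches value $v$'' is the inductive predicate $\mathrm{Leaf}(v, t)$ generated by three rules: $\Ret(v)$ reaches $v$; $\Tau(t')$ reaches $v$ if $t'$ does; and $\Vis(e,k)$ reaches $v$ if some $k(x)$ does. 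The hypothesis is $\mathrm{Leaf}(v, \interp(h, t))$ and the goal is $\mathrm{Leaf}(v, \mathrm{run}_{\mathrm{perm}}(\interp(\mathrm{ga\_Gov}(G,h), t)))$.

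The key steps, in order, are as follows. (1) Instantiate G2 with $h$ and $t$, obtaining
\[
\mathrm{run}_{\mathrm{perm}}(\interp(\mathrm{ga\_Gov}(G,h), t)) \approx \interp(h, t),
\]
where $\approx$ is eutt, possibly after embedding the ungoverned tree's events into $\IOE$ via the injection used in G2's statement. (2) Prove a transport lemma: if $t_1 \approx t_2$ and $\mathrm{Leaf}(v, t_2)$, then $\mathrm{Leaf}(v, t_1)$. The proof is by induction on the $\mathrm{Leaf}$ derivation. In each case, invert the eutt hypothesis, which is coinductive but inverts to an inductive $\Tau$-stripping prefix, and then walk down the finitely many $\Tau$ steps on the other side. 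This also requires an inner induction on the $\Tau$-prefix in the eutt unfolding. (3) If G2 is stated modulo an event embedding, we also show that $\mathrm{Leaf}$ is invariant under $\mathrm{translate}$ along that embedding. This is a routine induction, since translation only relabels $\Vis$ nodes. (4) Combine (1)--(3) to conclude.

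The main obstacle is step (2), specifically the interplay of inductive $\mathrm{Leaf}$ with coinductive eutt in the $\Tau$ case. On one side, eutt may absorb an unbounded but finite number of $\Tau$s that do not appear on the other. The standard remedy, which we would try first, is to strengthen the induction hypothesis to be generic in $t_1$ and to perform a nested induction on the $\mathrm{eqitF}$ inductive layer obtained via \texttt{punfold}. If the ITrees library already supplies a \texttt{Proper} instance for its \texttt{Leaf} relation with respect to eutt, we would simply invoke it and make step (2) a one-liner.

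Note that G1 and G3 are not needed. Goal preservation is purely a consequence of transparency, which is consistent with the claim that the three axioms are independent.
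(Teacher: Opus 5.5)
Your proposal is correct and follows the paper's route. The paper derives goal preservation in one line from G2 (transparency), and your steps (2)--(3) make explicit the routine bookkeeping it leaves implicit: transporting the reachability predicate along eutt and, if needed, along the event embedding, without using G1 or G3.
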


\noindent
This follows from G2 (transparency): permissive governance erases
the governance events and preserves the base handler's semantics.

\paragraph{Instantiation.}
The concrete \Mashin{} operator $\Gov$ instantiates the record:

\begin{lstlisting}[style=coq]
Definition mashin_governance : GovernanceAlgebra :=
  mk_gov_algebra Gov
    governed_interp_safe     (* G1 from Safety.v    *)
    governed_transparency    (* G2 from Transparency.v *)
    Gov_base_proper.         (* G3 from Functor.v   *)
\end{lstlisting}

\noindent
All derived properties immediately apply to $\Gov$. The algebra-level
theorems (\texttt{any\_governance\_algebra\_subsumes},
\texttt{any\_governance\_algebra\_converges},
\texttt{any\_governance\_algebra\_preserves\_goals}) establish these
properties for \emph{any} operator satisfying the three axioms.

\section{Enriched Monoidal Category}
\label{sec:monoidal}

We extend category $\CatM$ with symmetric monoidal structure.

\subsection{Tensor Product}

Given morphisms $f : A \to \itree\ \DirectiveE\ B$ and
$g : C \to \itree\ \DirectiveE\ D$, the tensor product
$f \otimes g : (A \times C) \to \itree\ \DirectiveE\ (B \times D)$
is defined as sequential-independent composition:

\begin{lstlisting}[style=coq]
Definition mashin_tensor {A B C D}
  (f : mashin_morphism A B)
  (g : mashin_morphism C D)
  : mashin_morphism (A * C) (B * D) :=
  fun p =>
    let (a, c) := p in
    ITree.bind (f a) (fun b =>
    ITree.bind (g c) (fun d =>
    ret (b, d))).
\end{lstlisting}

\noindent
The two computations share no state; $f$'s effects complete before
$g$'s begin. This is not true concurrent interleaving, which would
require a more complex construction (noted as future work in
Section~\ref{sec:conclusion}).

\begin{lemma}[tensor\_id]
$\mathrm{id} \otimes \mathrm{id} = \mathrm{id}$.
\end{lemma}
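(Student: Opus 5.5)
The equation $\mathrm{id} \otimes \mathrm{id} = \mathrm{id}$ is read pointwise and up to eutt, as for the category laws of $\CatM$. Concretely, for every pair $p : A \times C$ we must show
\[
\mathrm{mashin\_tensor}(\mathrm{ret}, \mathrm{ret})(p) \approx \mathrm{ret}(p),
\]
where $\approx$ denotes eutt and $\mathrm{id} = \mathrm{ret}$ is the identity of $\CatM$. We need not state function extensionality; the lemma is the pointwise statement quantified over $p$.

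First, destruct $p$ as $(a, c)$. This is needed because the \texttt{let (a, c) := p} pattern in \texttt{mashin\_tensor} only reduces on a syntactic pair. After unfolding, the left-hand side becomes
\[
\mathrm{bind}(\mathrm{ret}\ a)\bigl(\lambda b.\ \mathrm{bind}(\mathrm{ret}\ c)(\lambda d.\ \mathrm{ret}(b,d))\bigr).
\]

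Next, apply the monad law ``bind of ret'' (\texttt{bind\_ret\_l} in the Interaction Trees library) twice, outermost first. The first rewrite turns the term into $\mathrm{bind}(\mathrm{ret}\ c)(\lambda d.\ \mathrm{ret}(a,d))$, and the second yields $\mathrm{ret}(a,c)$. This is exactly the right-hand side, so reflexivity closes the goal.

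Because \texttt{bind\_ret\_l} already holds up to strong bisimulation ($\cong$), the whole argument in fact establishes the stronger equation up to $\cong$, which then implies eutt. No coinduction or paco reasoning is required.

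The only delicate point is a mechanical one: rewriting under the binder $\lambda b$. If \texttt{setoid\_rewrite} fails there, because the needed \texttt{Proper} instance for bind in the continuation argument is not found, fall back on the order above. Rewrite the outer bind first, which substitutes $a$ for $b$ and removes the binder. After that, the inner $\mathrm{bind}(\mathrm{ret}\ c)$ sits at top level and can be rewritten directly. Alternatively, one can first \texttt{cbn} away the outer ret-bind, which in the library reduces definitionally after one unfolding step, and then finish with a single \texttt{rewrite bind\_ret\_l}.
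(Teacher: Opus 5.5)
Your proof is correct and follows essentially the paper's approach. The paper gives no separate proof of tensor\_id, but it handles all the pure structural laws by unfolding and applying \texttt{bind\_ret\_l} repeatedly, which is exactly your destruct--unfold--rewrite-twice argument; it even yields $\cong$, hence eutt. One small correction to your fallback: \texttt{cbn} will not reduce $\mathrm{bind}(\mathrm{ret}\ a)\,k$ to $k\,a$, because \texttt{ITree.bind} is a cofixpoint that only unfolds under \texttt{observe}, so use \texttt{rewrite bind\_ret\_l} outermost first, which (as you note) needs no rewriting under binders.
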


\begin{lemma}[tensor\_pure]
$\code(f) \otimes \code(g) = \code(\lambda(a,c).\;(f(a), g(c)))$.
The pure fragment is closed under tensor.
\end{lemma}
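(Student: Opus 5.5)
The plan is to prove the equation pointwise up to eutt, since the category laws of $\CatM$ are themselves stated up to eutt (\texttt{Category.v}). By functional extensionality on morphisms, or equivalently by showing that the two Kleisli arrows are eutt at every input, it suffices to fix an arbitrary pair $p = (a,c) : A \times C$ and show
\[
(\code(f) \otimes \code(g))(a,c) \;\approx\; \code(\lambda(a,c).\,(f(a),g(c)))(a,c),
\]
where $\approx$ denotes eutt. The first step is to destructure $p$, so that the \texttt{let} in \texttt{mashin\_tensor} reduces definitionally.

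Next, unfold the definitions. The left side becomes
\[
\mathrm{ret}(f(a)) \bind \big(\lambda b.\ \mathrm{ret}(g(c)) \bind (\lambda d.\ \mathrm{ret}(b,d))\big).
\]
The right side is $\mathrm{ret}((f(a),g(c)))$, after $\beta$-reducing the lambda on the pair. We then apply the monad law \texttt{bind\_ret\_l} twice, using \texttt{setoid\_rewrite} under the binder for the inner occurrence, or simply rewriting the outer bind first. The first rewrite reduces the left side to $\mathrm{ret}(g(c)) \bind (\lambda d.\ \mathrm{ret}(f(a),d))$, and the second to $\mathrm{ret}(f(a),g(c))$. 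The two sides then agree up to reflexivity of eutt. In fact both rewrites hold up to strong bisimulation (\texttt{eq\_itree}), so the statement even holds in that stronger sense. The lemma \texttt{tensor\_id} follows the same script with $f = g = \mathrm{id}$.

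The closing remark, that the pure fragment is closed under tensor, is then immediate: the right-hand side is literally $\code$ applied to a function, so the tensor of two pure morphisms is (eutt-equal to) a pure morphism.

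The main obstacle is purely technical. The rewrite must happen under the continuation binder $\lambda b.\ \ldots$ before $b$ is instantiated, which requires the \texttt{Proper} instances for \texttt{ITree.bind} with pointwise eutt on continuations. Our first attempt avoids this by rewriting the outer \texttt{bind\_ret\_l} first. That instantiates $b := f(a)$, after which the inner bind is at top level and needs no setoid machinery.
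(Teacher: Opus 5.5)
Your proposal is correct and follows essentially the same route the paper uses for its pure structural lemmas. You destructure the input pair, unfold \texttt{mashin\_tensor} and \code{}, and collapse the two nested binds with \texttt{bind\_ret\_l}, rewriting the outer one first so no setoid rewriting under the binder is needed; both sides then reduce to $\mathrm{ret}(f(a),g(c))$. One small quibble: functional extensionality plays no role, since equality of morphisms in $\CatM$ is pointwise eutt, which is exactly what you prove.
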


\subsection{Structural Morphisms}

The unit object is $\mathrm{unit}$. Structural isomorphisms are
all pure (no effects):

\begin{itemize}
  \item \textbf{Associator} $\alpha : (A \otimes B) \otimes C \xrightarrow{\sim} A \otimes (B \otimes C)$
  \item \textbf{Left unitor} $\lambda : I \otimes A \xrightarrow{\sim} A$
  \item \textbf{Right unitor} $\rho : A \otimes I \xrightarrow{\sim} A$
  \item \textbf{Braiding} $\sigma : A \otimes B \xrightarrow{\sim} B \otimes A$
\end{itemize}

Each isomorphism is verified by round-trip lemmas
(\texttt{assoc\_iso\_lr}/\texttt{assoc\_iso\_rl},
\texttt{lunit\_iso\_lr}/\texttt{lunit\_iso\_rl},
\texttt{runit\_iso\_lr}/\texttt{runit\_iso\_rl},
\texttt{braid\_iso}).

\subsection{Coherence Conditions}

\begin{theorem}[Pentagon]
\label{thm:pentagon}
The following diagram commutes:
\[
\begin{tikzcd}[column sep=small]
  ((A \otimes B) \otimes C) \otimes D
    \arrow[rr, "\alpha \otimes \mathrm{id}"]
    \arrow[d, "\alpha"'] &  &
  (A \otimes (B \otimes C)) \otimes D
    \arrow[d, "\alpha"] \\
  (A \otimes B) \otimes (C \otimes D)
    \arrow[dr, "\alpha"'] & &
  A \otimes ((B \otimes C) \otimes D)
    \arrow[dl, "\mathrm{id} \otimes \alpha"] \\
  & A \otimes (B \otimes (C \otimes D)) &
\end{tikzcd}
\]
Both paths yield the same morphism (up to eutt).
\end{theorem}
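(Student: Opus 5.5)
The plan is to reduce both paths to pure morphisms and compare the underlying functions pointwise. The associator $\alpha$ is defined as $\code$ of the reassociation function $((a,b),c)\mapsto(a,(b,c))$, and $\mathrm{id}=\code(\lambda x.\,x)$ is $\mathrm{ret}$. By Lemma (tensor\_pure), $\alpha\otimes\mathrm{id}$ and $\mathrm{id}\otimes\alpha$ are again $\code$ of the product functions $\alpha\times\mathrm{id}$ and $\mathrm{id}\times\alpha$, up to eutt. Every arrow in the pentagon is therefore of the form $\code(f)$ for a total pure function $f$.

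Next I would prove a helper lemma, call it code\_compose: $\code(f)\bind\code(g)\approx\code(g\circ f)$ pointwise up to eutt. This is just the monad left-identity law $\mathrm{bind}(\mathrm{ret}\,x)\,k\approx k\,x$ from the ITree library, applied at each argument. Using this lemma together with the category laws of Category.v, I will collapse the top-right path $(\mathrm{id}\otimes\alpha)\circ\alpha\circ(\alpha\otimes\mathrm{id})$ to the single morphism $\code(F_1)$ and the bottom-left path $\alpha\circ\alpha$ to $\code(F_2)$.

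Then $F_1$ and $F_2$ agree on every input by destructing the nested tuple $(((a,b),c),d)$: both functions reduce to $(a,(b,(c,d)))$ by computation. So the two $\code$ morphisms are equal, and in particular eutt-equivalent by reflexivity.

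The main obstacle is the bookkeeping in Rocq rather than the mathematics. Two points need care.

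\begin{itemize}
  \item The \texttt{let (a,c) := p} pattern in \texttt{mashin\_tensor} blocks reduction until $p$ is destructed. I will therefore state everything pointwise, intro the argument, and destruct the tuple fully before any rewriting.
  \item Rewriting under binders requires the Proper instances for \texttt{ITree.bind} with respect to eutt.
\end{itemize}

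If the tensor\_pure route proves awkward to rewrite with, the fallback is to unfold everything directly. I would intro the quadruple, destruct it, and repeatedly apply \texttt{bind\_ret\_l} (e.g.\ via \texttt{simpl; rewrite !bind\_ret\_l; reflexivity}), which should normalize both sides to $\mathrm{ret}\,(a,(b,(c,d)))$.
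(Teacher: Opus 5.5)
Your proposal is correct and is essentially the paper's proof: the paper unfolds the pure structural morphisms and rewrites with \texttt{bind\_ret\_l} until both paths reduce to the same $\mathrm{ret}\,(a,(b,(c,d)))$, which is exactly your fallback. Your main route packages those same \texttt{bind\_ret\_l} steps into \texttt{tensor\_pure} and a code-composition lemma before comparing the underlying functions pointwise. That is a reorganization of the same computation rather than a different argument.
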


\begin{theorem}[Triangle]
\label{thm:triangle}
$\alpha \mathbin{;} (\mathrm{id} \otimes \lambda) = \rho \otimes \mathrm{id}$
as morphisms $(A \otimes I) \otimes B \to A \otimes B$.
\end{theorem}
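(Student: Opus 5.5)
We prove the Triangle identity by unfolding both sides to explicit interaction trees and comparing them pointwise up to eutt. Both sides are morphisms $(A \otimes I) \otimes B \to A \otimes B$ with $I = \mathrm{unit}$. Since the structural isomorphisms are pure, each is $\code$ of a total function. Concretely, $\alpha = \code(\lambda((a,c),d).\,(a,(c,d)))$, $\lambda = \code(\lambda(\_,b).\,b)$, and $\rho = \code(\lambda(a,\_).\,a)$, with $\mathrm{id} = \mathrm{ret}$. So we fix an arbitrary input $((a,u),b)$ and show that the two trees are eutt. Destructing $u : \mathrm{unit}$ to $\mathrm{tt}$ first simplifies the pattern matches in the definitions.

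\textbf{Step 1: the right-hand side.} Here $\rho \otimes \mathrm{id}$ is a tensor of two $\code$ morphisms, since $\mathrm{id} = \mathrm{ret} = \code(\lambda x.\,x)$. By Lemma tensor\_pure it is therefore $\code(\lambda((a,u),b).\,(a,b))$. Applied to our input this gives $\mathrm{ret}(a,b)$. We can also get this without the lemma: unfold \texttt{mashin\_tensor} and apply the monad law $\mathrm{bind}(\mathrm{ret}\,x)\,k \approx k\,x$ twice.

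\textbf{Step 2: the left-hand side.} The composite $\alpha \mathbin{;} (\mathrm{id} \otimes \lambda)$ is Kleisli composition. Applying it to the input gives $\mathrm{bind}(\alpha((a,u),b))\,(\mathrm{id}\otimes\lambda)$. Since $\alpha$ is pure, one bind-ret law rewrites this to $(\mathrm{id}\otimes\lambda)(a,(u,b))$. Again $\mathrm{id}\otimes\lambda$ is a tensor of two $\code$ morphisms, so tensor\_pure, or two more bind-ret rewrites, reduces it to $\mathrm{ret}(a,b)$. Both sides are then $\mathrm{ret}(a,b)$ and reflexivity of eutt closes the goal.

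\textbf{Main obstacle.} The only real friction is mechanical. The \texttt{let (a,c) := p} pattern in \texttt{mashin\_tensor} blocks reduction until the input pair has been destructed. Rewriting under $\mathrm{bind}$ also needs the eutt-properness of $\mathrm{bind}$ in its continuation (setoid rewriting), which the ITrees library provides. We will therefore destruct the input completely, apply $\mathrm{cbn}$, and then use \texttt{bind\_ret\_l} repeatedly. If a goal remains stuck, we fall back to \texttt{pstep}/\texttt{eqit} reflexivity after normalizing both sides to $\mathrm{ret}(a,b)$. No coinduction is needed, because both trees are finite $\Ret$ nodes after reduction.
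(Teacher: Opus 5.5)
Your proposal is correct and matches the paper's argument. The paper likewise unfolds the pure structural morphisms and the tensor, then applies \texttt{bind\_ret\_l} repeatedly until both sides reduce to the same $\mathrm{ret}$ node, with no coinduction needed; your optional shortcut through \texttt{tensor\_pure} is just a packaged form of those same bind--ret reductions.
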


\begin{theorem}[Hexagon]
\label{thm:hexagon}
The braiding coherence diagram commutes:
$\alpha \mathbin{;} \sigma \mathbin{;} \alpha = (\sigma \otimes \mathrm{id}) \mathbin{;} \alpha \mathbin{;} (\mathrm{id} \otimes \sigma)$.
\end{theorem}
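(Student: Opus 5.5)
The hexagon reduces to a pointwise computation on pure structural morphisms. Every morphism in the statement is built from the associator $\alpha$, the braiding $\sigma$, identities, and tensors of these. All of them are pure ($\code$-morphisms, i.e.\ of the form $\lambda x.\,\mathrm{ret}(\phi(x))$ for a total function $\phi$ on tuples).

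The first step is to normalize each side to a single $\code$ morphism. I will use three facts:
\begin{itemize}[nosep]
  \item Lemma tensor\_pure: $\code(f)\otimes\code(g) = \code(f\times g)$, and $\mathrm{id} = \code(\lambda x.\,x)$.
  \item The monad law $\mathrm{ret}(x) \bind k \approx k(x)$ (eutt).
  \item Composition of $\code$ morphisms, $\code(f)\mathbin{;}\code(g) \approx \code(g\circ f)$, which follows directly from that law.
\end{itemize}
With these, $\sigma\otimes\mathrm{id}$ and $\mathrm{id}\otimes\sigma$ become $\code$ morphisms, and each three-fold composite collapses to $\code(\phi_L)$ and $\code(\phi_R)$ respectively.

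Before computing, I must fix the intended typing. As literally written, $\alpha\mathbin{;}\sigma\mathbin{;}\alpha$ is the standard hexagon $(A\otimes B)\otimes C \to B\otimes(C\otimes A)$. Here the middle $\sigma$ is the braiding $\sigma_{A,\,B\otimes C}$ and the last $\alpha$ is $\alpha_{B,C,A}$. The right-hand side is $(\sigma_{A,B}\otimes\mathrm{id}_C)\mathbin{;}\alpha_{B,A,C}\mathbin{;}(\mathrm{id}_B\otimes\sigma_{A,C})$. Evaluating on $((a,b),c)$:
\begin{itemize}[nosep]
  \item Left side: $((a,b),c)\mapsto(a,(b,c))\mapsto((b,c),a)\mapsto(b,(c,a))$.
  \item Right side: $((a,b),c)\mapsto((b,a),c)\mapsto(b,(a,c))\mapsto(b,(c,a))$.
\end{itemize}
So $\phi_L=\phi_R$ pointwise.

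In Rocq I would close the goal by eta-expanding the argument as \texttt{[[a b] c]}, unfolding \texttt{mashin\_tensor}, \texttt{assoc}, and \texttt{braid}, and repeatedly rewriting with \texttt{bind\_ret\_l} under the eutt setoid. Both sides then reduce to \texttt{ret (b,(c,a))}, and reflexivity finishes. Since the braiding is symmetric ($\sigma\mathbin{;}\sigma=\mathrm{id}$ by \texttt{braid\_iso}), the second hexagon with $\alpha^{-1}$ is not needed separately.

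The main obstacle is not mathematical but the rewriting itself. Destructuring \texttt{let (a,c) := p} inside nested binds blocks \texttt{bind\_ret\_l} until the pair is destructed. Rewriting under binders also requires the \texttt{Proper} instances for \texttt{ITree.bind} with respect to eutt. I would first prove a general lemma, \texttt{code\_comp : code f ;; code g $\approx$ code (g $\circ$ f)}, and use tensor\_pure to reduce everything to an equality of plain functions before touching itrees. That isolates the coinductive bookkeeping to one reusable lemma, and the hexagon itself becomes a \texttt{reflexivity} on tuples.
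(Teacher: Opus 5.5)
Your proposal is correct and follows essentially the same route as the paper: because the structural morphisms are pure, destructing the input as $((a,b),c)$, unfolding the tensor, associator, and braiding, and repeatedly rewriting with \texttt{bind\_ret\_l} reduces both sides to $\mathrm{ret}(b,(c,a))$. Your optional \texttt{code\_comp}/\texttt{tensor\_pure} factorization only repackages that same computation into a reusable lemma.
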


\noindent
All three proofs proceed by unfolding definitions and applying
\texttt{bind\_ret\_l} repeatedly, since the structural morphisms are
pure. The proofs are short (under 10 lines each) precisely because
purity makes coherence computational rather than coinductive.

\subsection{Governance of Tensor Compositions}

\begin{theorem}[tensor\_governed]
For any handler $h$, morphisms $f, g$, and input $p$:
$\govsafe(\mathrm{false}, \interp(\Gov(h), (f \otimes g)(p)))$.
\end{theorem}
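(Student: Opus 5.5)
The plan is to reduce the statement to the fundamental safety theorem \texttt{governed\_interp\_safe} (equivalently, axiom G1 for the instance \texttt{mashin\_governance}). The key observation is that $f \otimes g$ is, by definition, a morphism of $\CatM$: for each input $p$, $(f \otimes g)(p)$ is simply some tree of type $\itree\ \DirectiveE\ (B \times D)$. Since \texttt{governed\_interp\_safe} quantifies over \emph{all} programs $t : \itree\ \DirectiveE\ R$, instantiating it with $R := B \times D$ and $t := (f \otimes g)(p)$ gives exactly $\govsafe(\mathrm{false}, \interp(\Gov(h), (f \otimes g)(p)))$.

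Concretely, I would first destruct $p$ as $(a, c)$ so that the \texttt{let} in \texttt{mashin\_tensor} reduces to $\mathrm{bind}(f\,a, \lambda b.\ \mathrm{bind}(g\,c, \lambda d.\ \mathrm{ret}(b,d)))$. Strictly speaking this step is optional, since the theorem applies to the unreduced term as well. I would then apply \texttt{governed\_interp\_safe} with handler $h$. No coinduction, no reasoning about $\interp$ distributing over bind, and no case analysis on the events emitted by $f$ or $g$ is needed. This makes the result a one-line corollary, parallel to the argument for Theorem~\ref{thm:convergence}. The same argument extends verbatim to any GovernanceAlgebra $G$ via G1, which gives the parametric version.

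The only potential obstacle is a type-level one: the flag in G1 is universally quantified, but \texttt{governed\_interp\_safe} is stated at $a = \mathrm{false}$, which matches the goal, so the two agree. If a proof is instead wanted that builds compositionally on the safety of $f$ and $g$ separately, I would fall back to a \texttt{pcofix} argument. That route would rewrite $\interp(\Gov(h), \mathrm{bind}(t,k))$ to $\mathrm{bind}(\interp(\Gov(h),t), \lambda x.\,\interp(\Gov(h),k\,x))$ using \texttt{interp\_bind}, and then prove a bind lemma for \govsafe{}. The subtle point there is the permission flag at the junction: the continuation starts with flag $\mathrm{false}$ only after a $\Ret$, which is safe because $\mathrm{false}$ is the most restrictive flag. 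I would not expect to need this, since the direct instantiation suffices.
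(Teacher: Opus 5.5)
Your proposal is correct and takes essentially the paper's route. Since \texttt{governed\_interp\_safe} quantifies over every $t : \itree\ \DirectiveE\ R$, \texttt{tensor\_governed} is its direct instantiation at $R := B \times D$, $t := (f \otimes g)(p)$, exactly as the paper obtains \texttt{ga\_convergence} and \texttt{ct\_turing}. Your coinductive bind-lemma fallback is unnecessary here: the compositional decomposition appears only in the separate result \texttt{interp\_tensor\_distribute}, not in the governance proof.
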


\begin{theorem}[ga\_tensor\_governed]
For any GovernanceAlgebra $G$:
$\govsafe(\mathrm{false}, \interp(\mathrm{ga\_Gov}(G, h), (f \otimes g)(p)))$.
\end{theorem}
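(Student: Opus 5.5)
The statement follows directly from axiom G1 of Definition~\ref{def:gov-algebra}, with no coinduction needed. The key observation is that $(f \otimes g)(p)$ is just a program of type $\itree\ \DirectiveE\ (B \times D)$. Unfolding \texttt{mashin\_tensor} at the input $p = (a,c)$ gives the tree $\mathrm{bind}(f\,a, \lambda b.\,\mathrm{bind}(g\,c, \lambda d.\,\mathrm{ret}(b,d)))$. This is an ordinary inhabitant of the effect type $\DirectiveE$, and G1 quantifies over all such trees.

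The plan has three steps.
\begin{enumerate}
\item Fix a GovernanceAlgebra $G$, a handler $h$, morphisms $f,g$, and an input $p$.
\item Destructure $p$ into $(a,c)$. This is only needed so that the \texttt{let} in \texttt{mashin\_tensor} reduces; otherwise the term is merely convertible.
\item Apply the safety field $\mathrm{ga\_safe}$ of $G$. Instantiate it with handler $h$, result type $R := B \times D$, permission flag $a := \mathrm{false}$, and the tree $t := (f \otimes g)(p)$.
\end{enumerate}

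Since G1 is stated for arbitrary $a$, the choice $a = \mathrm{false}$ is a direct specialization. This mirrors the argument for Theorem~\ref{thm:convergence}, where G1 likewise applied directly.

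The only anticipated obstacle is bookkeeping in the mechanization. The record field's implicit arguments (the type $R$ and the tree) must unify with the tensor expression. If Rocq fails to infer $R = B \times D$ through the pattern-matching \texttt{let}, I would first unfold \texttt{mashin\_tensor}, perform the \texttt{destruct} on $p$, and then supply $R$ explicitly.

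No property of the tensor beyond its type is used. The proof goes through for any GovernanceAlgebra, which is exactly why the statement generalizes the preceding concrete theorem \texttt{tensor\_governed}. That theorem is recovered as the instance $G := \texttt{mashin\_governance}$.
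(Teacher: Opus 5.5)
Your proposal is correct and follows the paper's route: $(f \otimes g)(p)$ is just a tree of type $\itree\ \DirectiveE\ (B \times D)$, so the result is a direct instantiation of the safety axiom G1 (\texttt{ga\_safe}) at $a = \mathrm{false}$, the same one-step argument the paper gives for \texttt{ga\_convergence}. Destructuring $p$ is unnecessary: G1 quantifies over arbitrary trees, so $t := (f \otimes g)(p)$ unifies without unfolding \texttt{mashin\_tensor}.
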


\begin{theorem}[interp\_tensor\_distribute]
\label{thm:interp-tensor}
Interpretation distributes over tensor:
\[
\interp(\Gov(h), (f \otimes g)(a, c))
\quad\equiv\quad
\interp(\Gov(h), f(a)) \bind \lambda b.\;
\interp(\Gov(h), \mathrm{bind}(g(c), \lambda d.\; \mathrm{ret}(b, d)))
\]
Governance of the product decomposes into governance of the components.
\end{theorem}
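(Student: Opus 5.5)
We unfold the tensor and apply the standard interaction-tree fact that interpretation is a monad morphism, namely $\interp(H, t \bind k) \cong \interp(H, t) \bind (\lambda x.\, \interp(H, k(x)))$ (\texttt{interp\_bind}). The handler here is $H = \Gov(h)$. The equivalence $\equiv$ is read as eutt, or as strong bisimulation $\cong$ where the library provides it, since $\cong$ implies eutt.

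\textbf{Step 1: unfold the tensor.} For a pair $(a,c)$, the \texttt{let} in \texttt{mashin\_tensor} reduces definitionally, so
\[
(f\otimes g)(a,c) = \mathrm{bind}(f(a), \lambda b.\, \mathrm{bind}(g(c), \lambda d.\, \mathrm{ret}(b,d))).
\]
This is a \texttt{cbn}/\texttt{unfold}; no case analysis on $p$ is needed because the input is given as an explicit pair.

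\textbf{Step 2: distribute over the outer bind.} We rewrite with \texttt{interp\_bind} on the outer bind. This yields
\[
\interp(\Gov(h), f(a)) \bind \lambda b.\, \interp(\Gov(h), \mathrm{bind}(g(c), \lambda d.\, \mathrm{ret}(b,d))),
\]
which is exactly the right-hand side of Theorem~\ref{thm:interp-tensor}. The statement keeps the inner bind undistributed, so we stop here. If a fully distributed form were wanted, a second application of \texttt{interp\_bind} together with \texttt{interp\_ret} would finish it.

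\textbf{Step 3: close the goal.} After the rewrite, the goal closes by reflexivity of eutt. The only place we expect friction is setoid rewriting under the binder $\lambda b$. For this we need the Proper instances for \texttt{ITree.bind} with respect to pointwise eutt, plus $\interp(\Gov(h),-)$ respecting eutt; the latter is available via G3 and the standard \texttt{interp} Proper instance. If rewriting under the binder fails, we will instead use \texttt{eutt\_clo\_bind} (or \texttt{eqit\_bind'}) with a reflexive relation on the first component, then discharge the continuation goal pointwise by reflexivity. No coinduction is required: the argument is purely equational, using the monad-morphism law of \texttt{interp}.
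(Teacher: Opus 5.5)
Your proof is correct and follows what is clearly the intended argument (the paper states this theorem without a written proof): unfold \texttt{mashin\_tensor} on the explicit pair, then apply \texttt{interp\_bind} once to the outer bind, which yields the right-hand side exactly, even up to strong bisimulation $\cong$, so reflexivity closes the goal. The contingency in Step~3 is unnecessary because no rewriting under the binder $\lambda b$ ever happens, and G3 would not be the relevant fact there anyway, since it concerns equivalent handlers rather than eutt-equivalent trees.
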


\section{Algebraic Effects with Governed Handlers}
\label{sec:effects}

Standard algebraic effect systems~\cite{plotkin2009handlers,plotkin2013handling}
define handlers that interpret effects freely: any function of the
right type is a valid handler. In our framework, the governed fragment
admits only handlers that carry a machine-checked proof of safety
preservation. This is not merely a matter of record packaging; it
is a semantic exclusion. A handler that performs I/O without emitting
governance checks cannot inhabit the \texttt{GovernedHandler} type,
because its interpretation would produce trees violating \govsafe{},
and the required proof obligation cannot be discharged.

\subsection{Capability Sets}

A capability set is a characteristic function $\CapSet = \mathrm{Capability} \to \mathrm{bool}$
with standard operations:

\begin{lstlisting}[style=coq]
Definition cap_empty : CapSet := fun _ => false.
Definition cap_singleton (c : Capability) : CapSet :=
  fun c' => cap_eqb c c'.
Definition cap_union (s1 s2 : CapSet) : CapSet :=
  fun c => s1 c || s2 c.
Definition cap_full : CapSet := fun _ => true.
\end{lstlisting}

\noindent
$(\CapSet, \subseteq)$ forms a bounded partial order with $\emptyset$ as bottom
and $\mathrm{cap\_full}$ as top. Union is join: $\cap$ is idempotent
(\texttt{cap\_union\_idem}), commutative (\texttt{cap\_union\_comm}),
and associative (\texttt{cap\_union\_assoc}).

\subsection{The within\_caps Predicate}

\begin{definition}[within\_caps]
\label{def:within-caps}
The coinductive predicate $\withincaps(\mathrm{caps}, t)$ holds for
$t : \itree\ \DirectiveE\ R$ if every directive event in $t$ requires
only capabilities present in $\mathrm{caps}$ (or no capability at all):
\begin{itemize}
  \item $\Ret(r)$: always within any caps.
  \item $\Tau(t')$: within caps if $t'$ is.
  \item $\Vis(d, k)$: within caps if $\mathrm{directive\_in\_caps}(\mathrm{caps}, d)$
    and $\forall x.\; \withincaps(\mathrm{caps}, k(x))$.
\end{itemize}
\end{definition}

\noindent
This uses paco1 (parameterized coinduction with one parameter)
since the capability set is fixed during the coinduction.

\begin{lemma}[within\_caps\_weaken]
If $\mathrm{caps}_1 \subseteq \mathrm{caps}_2$ and
$\withincaps(\mathrm{caps}_1, t)$, then
$\withincaps(\mathrm{caps}_2, t)$.
\end{lemma}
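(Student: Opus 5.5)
The plan is a coinduction on $t$ in the style of paco, keeping the capability set fixed, so the predicate is a paco1 fixpoint. First I would isolate a pointwise monotonicity fact about the event-level check: if $\mathrm{caps}_1 \subseteq \mathrm{caps}_2$ (that is, $\forall c.\ \mathrm{caps}_1(c) = \mathrm{true} \Rightarrow \mathrm{caps}_2(c) = \mathrm{true}$), then for every directive $d$,
\[
\mathrm{directive\_in\_caps}(\mathrm{caps}_1, d) \Rightarrow \mathrm{directive\_in\_caps}(\mathrm{caps}_2, d).
\]
The proof goes by case analysis on the 14 constructors of $\DirectiveE$. The capability-free directives ($\mathrm{Observability}$, and possibly $\mathrm{RecordStep}$ or $\mathrm{EmitEvent}$, depending on how the required-capability function is defined) hold trivially. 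For each capability-requiring directive, $\mathrm{directive\_in\_caps}$ reduces to $\mathrm{caps}(c_d) = \mathrm{true}$ for the required capability $c_d$, and the subset hypothesis closes the goal. If $\mathrm{directive\_in\_caps}$ is written as a single function of a ``required capability option'', this becomes a two-case proof on that option and does not need to enumerate constructors.

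Next comes the main coinduction. I would start with \texttt{pcofix CIH}, keeping the hypothesis $\withincaps(\mathrm{caps}_1, t)$ generalized over $t$. I then unfold it with \texttt{punfold} and invert the functor step, which leaves three cases matching Definition~\ref{def:within-caps}:
\begin{itemize}
  \item $\Ret(r)$: fold with \texttt{pfold} and apply the $\Ret$ rule.
  \item $\Tau(t')$: the unfolded hypothesis gives $\withincaps(\mathrm{caps}_1, t')$, possibly guarded by the upaco bound. Emit the $\Tau$ rule and close by \texttt{right; apply CIH}.
  \item $\Vis(d, k)$: use the monotonicity lemma to discharge $\mathrm{directive\_in\_caps}(\mathrm{caps}_2, d)$. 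For each $x$, close $k(x)$ by \texttt{right; apply CIH} on the hypothesis for $k(x)$, after clearing the \texttt{upaco} disjunction. The disjunction's second branch is impossible because the initial relation is bottom; alternatively, handle it with \texttt{pclearbot}.
\end{itemize}
The functor must be monotone for paco to apply; that obligation is presumably already discharged when the predicate is defined.

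The main obstacle is bookkeeping rather than mathematics. Inverting the unfolded $\Vis$ case on an indexed effect type ($d : \DirectiveE\,X$ with a dependent continuation) can produce \texttt{existT} equalities, which need \texttt{dependent destruction} or \texttt{inj\_pair2}. I would first try to avoid inversion altogether by destructing \texttt{observe} $t$ and matching on the unfolded functor directly. If the definition uses \texttt{observe}, I would rewrite with \texttt{itree\_eta} to expose the constructors. The monotonicity lemma on $\mathrm{directive\_in\_caps}$ is routine, though tedious if it has to be done per constructor.
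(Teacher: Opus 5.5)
Your proposal is correct and takes essentially the same route the paper describes for its capability lemmas: \texttt{pcofix} on the target predicate, case analysis on the observed tree, and \texttt{dependent destruction} for the indexed $\Vis$ inversion. In the $\Vis$ case you discharge the directive with a pointwise monotonicity lemma (the development's \texttt{directive\_in\_caps\_mono}) and close each continuation with the coinductive hypothesis after \texttt{pclearbot}.
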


\begin{lemma}[within\_full]
$\forall t.\; \withincaps(\mathrm{cap\_full}, t)$.
\end{lemma}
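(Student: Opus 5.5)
We prove $\withincaps(\mathrm{cap\_full}, t)$ for every $t$ by parameterized coinduction (paco1), with the capability set fixed at $\mathrm{cap\_full}$. We never reason about the shape of $t$ globally; we only need to show that every node locally satisfies the condition of Definition~\ref{def:within-caps}. The coinduction hypothesis covers the continuations.

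The first step is the purely inductive fact that $\mathrm{directive\_in\_caps}(\mathrm{cap\_full}, d)$ holds for every directive $d$. We prove it by case analysis on the 14 constructors of $\DirectiveE$.
\begin{itemize}
  \item For each constructor that requires a capability $c$, membership reduces to $\mathrm{cap\_full}(c) = \mathrm{true}$. This holds definitionally, since $\mathrm{cap\_full} = \lambda\_.\,\mathrm{true}$.
  \item For constructors requiring no capability, such as $\mathrm{Observability}$, the predicate holds trivially by the ``or no capability at all'' clause.
\end{itemize}
If $\mathrm{directive\_in\_caps}$ is stated as a boolean check, each case closes by computation (\texttt{reflexivity}). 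Otherwise we unfold and apply the corresponding introduction rule.

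The second step is the coinduction proper. We apply \texttt{pcofix CIH} to the goal $\forall t.\;\withincaps(\mathrm{cap\_full}, t)$, then \texttt{pstep}, and case on $\mathrm{observe}(t)$.
\begin{itemize}
  \item In the $\Ret$ case, the $\Ret$ rule applies directly.
  \item In the $\Tau(t')$ case, we apply the $\Tau$ rule and discharge the subgoal for $t'$ with \texttt{right; apply CIH}.
  \item In the $\Vis(d,k)$ case, we supply the first step's lemma for $d$. For each $x$, we close the continuation obligation with \texttt{right; apply CIH} at $k(x)$.
\end{itemize}
Guardedness is automatic under paco because the hypothesis is only used under the \texttt{upaco} disjunction after one unfolding step.

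An alternative is to derive the result from \texttt{within\_caps\_weaken}. This would require some $\mathrm{caps}_1$ for which $\withincaps(\mathrm{caps}_1, t)$ is already known for every $t$, which is circular, so the direct coinduction is the right route. The main obstacle is bookkeeping rather than mathematics. We must make sure the unfolding of $\mathrm{observe}(t)$ matches the functor underlying the paco predicate, which may need a \texttt{destruct (observe t) eqn:}/\texttt{rewrite} dance or the usual \texttt{itree\_eta} step. We must also check that the dependent match over the directive's return type does not obstruct the case split in the first step; if it does, we state the lemma as $\forall X\,(d : \DirectiveE\;X)$ and destruct $d$ directly.
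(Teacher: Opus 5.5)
Your proposal is correct and follows the same approach as the paper. The paper gives no explicit proof of this lemma, but it proves its \withincaps{} facts by \texttt{pcofix} with case analysis on $\mathrm{observe}(t)$ (as described for \texttt{bind\_within\_caps}). Here the $\Vis$ case reduces to the pointwise fact $\mathrm{directive\_in\_caps}(\mathrm{cap\_full}, d)$, which holds because $\mathrm{cap\_full}$ is constantly $\mathrm{true}$.

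One bookkeeping note: if a directive's required capability depends on its parameters, a bare \texttt{reflexivity} after destructing the constructor may not close the goal. In that case, destruct the computed capability itself, after which every branch becomes $\mathrm{cap\_full}(c) = \mathrm{true}$.
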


\subsection{Primitive Capability Profiles}

Each primitive has a tight capability bound:

\begin{theorem}[Primitive profiles]
\label{thm:profiles}
\begin{enumerate}
  \item \texttt{code\_within\_empty}: $\withincaps(\emptyset, \code(f)(a))$.
    Pure computation needs no capabilities.
  \item \texttt{reason\_within\_llm}: $\withincaps(\{\mathrm{CapComputeLLMReason}\}, \reason(\ldots)(a))$.
  \item \texttt{memory\_within\_mem}: $\withincaps(\{\mathrm{CapMemory}\}, \memory(\ldots)(a))$.
  \item \texttt{call\_within\_call}: $\withincaps(\{\mathrm{CapMachineCall}\}, \mcall(\ldots)(a))$.
\end{enumerate}
\end{theorem}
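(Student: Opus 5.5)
The plan is to unfold each primitive to its concrete interaction tree and discharge \withincaps{} by a short paco1 coinduction. In every case the tree has at most one visible event followed by a pure continuation, so the coinduction never loops back to its hypothesis.

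\emph{The code case.} $\code(f)(a)$ unfolds to $\mathrm{ret}(f(a))$, which is $\Ret(f(a))$. One use of the $\Ret$ clause of Definition~\ref{def:within-caps} (folding the paco constructor) closes the goal for $\emptyset$, and in fact for any capability set.

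\emph{The three effectful primitives.} By definition each one has the shape
\[
\mathrm{bind}(\mathrm{trigger}(d(\mathrm{build}(a))), \lambda r.\; \mathrm{ret}(\mathrm{extract}(r))).
\]
Here $d$ is $\mathrm{LLMCall}$ for \reason, $\mathrm{MemoryOp}$ for \memory, and $\mathrm{CallMachine}$ for \mcall. I will first rewrite $\mathrm{trigger}$ followed by bind into the normal form $\Vis(d(\ldots), \lambda r.\; \Tau(\Ret(\mathrm{extract}(r))))$, possibly without the $\Tau$ depending on the library's unfolding. I will use the ITrees unfolding lemmas (\texttt{bind\_trigger}, or \texttt{unfold\_bind} together with \texttt{observe} rewriting).

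Since \withincaps{} is defined on the raw tree and not up to eutt, this rewrite must hold as an equality of \texttt{observe}, or else I need a lemma that \withincaps{} respects eutt. I would first try the syntactic route. The bind of a $\Vis$ reduces definitionally after one \texttt{observe} step, so \texttt{pstep; cbn} should expose the $\Vis$ node directly.

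At the $\Vis$ node, two obligations remain.
\begin{enumerate}
\item $\mathrm{directive\_in\_caps}(\{c\}, d(\ldots))$. This follows by computation: the directive-to-capability map sends $\mathrm{LLMCall}$ to $\mathrm{CapComputeLLMReason}$, $\mathrm{MemoryOp}$ to $\mathrm{CapMemory}$, and $\mathrm{CallMachine}$ to $\mathrm{CapMachineCall}$. Then $\mathrm{cap\_singleton}(c)(c) = \mathrm{cap\_eqb}(c,c) = \mathrm{true}$, which needs reflexivity of \texttt{cap\_eqb}.
\item For every response $x$, the continuation is within caps. It is $\Ret$, possibly under one $\Tau$, so the $\Tau$ and $\Ret$ clauses finish it.
\end{enumerate}

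The main obstacle is the first step for the effectful primitives: getting the bind-of-trigger into a form where the \withincaps{} constructors apply syntactically, given that the coinductive predicate is not stated up to eutt. If direct \texttt{cbn}/\texttt{observe} manipulation is awkward, the fallback is a general lemma that \withincaps{} is closed under bind. Its statement is: if $\withincaps(\mathrm{caps}, t)$ and $\forall x.\;\withincaps(\mathrm{caps}, k(x))$, then $\withincaps(\mathrm{caps}, t \bind k)$. It is proved by paco coinduction with case analysis on \texttt{observe}~$t$. That lemma would also be needed for the later composition results, so proving it here is worthwhile. With it, each profile reduces to the single-event fact $\withincaps(\{c\}, \mathrm{trigger}(d(\ldots)))$, which is handled by step 1 above.
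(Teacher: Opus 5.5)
Your proposal is correct. The paper states these profiles without a written proof, and your direct unfolding is the evident intended argument: $\Ret$ for \code, and a single $\Vis$ with a $\Ret$ continuation for \reason, \memory, \mcall, closed by the \withincaps{} clauses and by computing \texttt{directive\_in\_caps} on the singleton set. Your fallback bind lemma is the equal-caps instance of the paper's \texttt{bind\_within\_caps} (Theorem~\ref{thm:bind-caps}), so it would also work. It is not needed, though: with the standard ITrees definition, a bind of a trigger observes syntactically as $\Vis(d, \lambda x.\,\ldots)$ whose continuation observes as $\Ret$, with no intervening $\Tau$.
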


\subsection{Compositional Closure}

\begin{theorem}[bind\_within\_caps]
\label{thm:bind-caps}
If $\withincaps(\mathrm{caps}_1, t)$ and
$\forall r.\; \withincaps(\mathrm{caps}_2, k(r))$,
then $\withincaps(\mathrm{caps}_1 \cup \mathrm{caps}_2, t \bind k)$.
\end{theorem}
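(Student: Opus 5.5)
The approach is parameterized coinduction (paco1) on the \withincaps{} predicate, with the first step weakening both hypotheses to the common set $\mathrm{caps} := \mathrm{caps}_1 \cup \mathrm{caps}_2$. Since $\mathrm{caps}_1 \subseteq \mathrm{caps}$ and $\mathrm{caps}_2 \subseteq \mathrm{caps}$ (from the definition of union as pointwise disjunction), Lemma \texttt{within\_caps\_weaken} gives $\withincaps(\mathrm{caps}, t)$ and $\forall r.\;\withincaps(\mathrm{caps}, k(r))$. This reduces the theorem to the single-set statement
\[
\withincaps(c, t) \;\wedge\; (\forall r.\; \withincaps(c, k(r))) \;\Longrightarrow\; \withincaps(c, t \bind k),
\]
so the coinduction can be run with one fixed parameter, as the paco1 formulation requires.

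For the single-set statement, I will generalize over $t$ and apply \texttt{pcofix}. I will then case-split on the head of $t$ by unfolding $t$ with its observation, and rewrite $t \bind k$ using the standard unfolding equations for bind (\texttt{unfold\_bind} / \texttt{observe} of bind).

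\begin{itemize}
\item In the $\Ret(r)$ case, $t \bind k$ becomes $k(r)$. The hypothesis on $k$ then gives the result, after upcasting from the coinductive predicate into the paco relation with the current hypothesis (\texttt{paco1\_mon} together with a \texttt{gpaco}-style \emph{base} step, or simply \texttt{upaco} left injection).
\item In the $\Tau(t')$ case, $t \bind k$ unfolds to $\Tau(t' \bind k)$. I invert $\withincaps(c, \Tau(t'))$ to get $\withincaps(c, t')$, take one guarded step with the $\Tau$ rule, and call the coinductive hypothesis.
\item In the $\Vis(d, k')$ case, $t \bind k$ unfolds to $\Vis(d, \lambda x.\; k'(x) \bind k)$. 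Inversion gives $\mathrm{directive\_in\_caps}(c, d)$ and $\forall x.\;\withincaps(c, k'(x))$. The directive condition is reused verbatim, and for each $x$ the coinductive hypothesis closes the goal under the guard.
\end{itemize}

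I expect the main obstacle to be the $\Ret$ case. There the goal is not guarded, and the target $k(r)$ is an arbitrary tree rather than something the coinductive hypothesis covers. Plain \texttt{pcofix} leaves a goal of the form $\mathrm{paco1}\ F\ r'\ (k\ r)$, and this must be discharged by monotonicity from the known $\mathrm{paco1}\ F\ \bot\ (k\ r)$ rather than by the coinductive hypothesis. I will first try \texttt{eapply paco1\_mon; [apply H\_k | intros; contradiction]}. If the equational rewriting of bind under paco is awkward, because bind unfolds only up to \texttt{observe} and not definitionally, I will instead prove the statement for $\withincaps$ stated on \texttt{observe} and use a \texttt{Proper} instance for \texttt{eq\_itree}. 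The last resort is to switch to gpaco (\texttt{ginit}/\texttt{gcofix}) with up-to-bind reasoning. A secondary technical point is inverting the paco hypothesis in each case, which needs \texttt{punfold} followed by \texttt{inversion} with dependent-equality cleanup for the $\Vis$ constructor, since the event index $X$ is existential.
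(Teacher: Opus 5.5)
Your proof is correct and is essentially the paper's: \texttt{pcofix} with case analysis on $\mathrm{observe}(t)$, closing the $\Ret$ case from the hypothesis on $k$ and the $\Vis$ case (after dependent destruction of the inverted hypothesis) by the coinductive hypothesis. The only difference is bookkeeping. You weaken both hypotheses to $\mathrm{caps}_1 \cup \mathrm{caps}_2$ up front via \texttt{within\_caps\_weaken}, whereas the paper keeps the two sets separate through the coinduction and weakens inside the cases: from $\mathrm{caps}_2$ in the $\Ret$ case, and via \texttt{directive\_in\_caps\_mono} in the $\Vis$ case. Paco1 permits that, since only the goal's set must stay fixed, so front-loading the weakening is a convenience rather than a requirement.
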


\noindent
The proof uses pcofix (parameterized coinduction) with case
analysis on $\mathrm{observe}(t)$. The $\Ret$ case applies weakening
from $\mathrm{caps}_2$ to $\mathrm{caps}_1 \cup \mathrm{caps}_2$.
The $\Vis$ case applies \texttt{directive\_in\_caps\_mono} for the
directive and the coinductive hypothesis for the continuation.

\begin{corollary}[seq\_comp\_caps]
$\withincaps(\mathrm{caps}_1, f(a)) \wedge
(\forall b.\; \withincaps(\mathrm{caps}_2, g(b))) \implies
\withincaps(\mathrm{caps}_1 \cup \mathrm{caps}_2, (f \mathbin{;} g)(a))$.
\end{corollary}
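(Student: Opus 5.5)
The corollary follows by unfolding sequential composition in $\CatM$ and then applying Theorem~\ref{thm:bind-caps}. Composition in $\CatM$ is monadic bind, so by definition $(f \mathbin{;} g)(a) = f(a) \bind g$. After rewriting with this equation, the goal has the form $\withincaps(\mathrm{caps}_1 \cup \mathrm{caps}_2, f(a) \bind g)$.

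Next I would instantiate Theorem~\ref{thm:bind-caps} with $t := f(a)$ and $k := g$. The first hypothesis $\withincaps(\mathrm{caps}_1, f(a))$ is exactly the first conjunct of the assumption. The second hypothesis $\forall r.\; \withincaps(\mathrm{caps}_2, g(r))$ is exactly the second conjunct, with the bound variable renamed. The theorem's conclusion is then literally the goal.

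The only potential obstacle is definitional. If composition in \texttt{Category.v} is written with an extra wrapper (for instance $\lambda a.\, \mathrm{ITree.bind}(f\,a)\,(\lambda b.\, g\,b)$ rather than $\mathrm{ITree.bind}(f\,a)\,g$), the match is only up to $\eta$-conversion. Rocq handles this by conversion, so I would first try \texttt{unfold} followed by \texttt{apply bind\_within\_caps}.

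If instead composition were only eutt-equal to a bind, I would need a lemma stating that \withincaps{} is closed under eutt. Such a lemma is not stated above, and would be proved by a routine paco argument that handles finitely many $\Tau$ steps on either side. I expect this not to be necessary, since composition is defined as bind.
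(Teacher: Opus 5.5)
Your proposal is correct and matches the paper's implicit argument: composition in $\CatM$ is monadic bind, so $(f \mathbin{;} g)(a)$ unfolds to $f(a) \bind g$, and Theorem~\ref{thm:bind-caps} applied with $t := f(a)$ and $k := g$ gives the goal directly. The paper states this as an immediate corollary of \texttt{bind\_within\_caps} with no further proof, so your contingency about closure of \withincaps{} under eutt is not needed.
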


\subsection{No Ambient Effects}

\begin{theorem}[no\_ambient\_effects]
\label{thm:no-ambient}
If $\withincaps(\emptyset, t)$ and $\mathrm{observe}(t) = \Vis(d, k)$,
then $\mathrm{is\_observability}(d) = \mathrm{true}$.
\end{theorem}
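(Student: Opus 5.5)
The proof is a one-step unfolding of the coinductive predicate followed by a case analysis on the directive. Since $\withincaps$ is defined via paco1 as the greatest fixed point of a monotone generating functor, I would first apply the unfolding lemma (\texttt{punfold} on the hypothesis $\withincaps(\emptyset, t)$). This yields the one-step view of $t$: a proof of the generating functor applied to $\mathrm{observe}(t)$. Rewriting with $\mathrm{observe}(t) = \Vis(d, k)$ then leaves only the $\Vis$ clause of Definition~\ref{def:within-caps}. Inverting that clause, I obtain $\mathrm{directive\_in\_caps}(\emptyset, d) = \mathrm{true}$; the continuation conjunct $\forall x.\; \withincaps(\emptyset, k(x))$ is not needed.

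The remaining content is a purely computational lemma about $\mathrm{directive\_in\_caps}$:
\[
\mathrm{directive\_in\_caps}(\mathrm{cap\_empty}, d) = \mathrm{true} \implies \mathrm{is\_observability}(d) = \mathrm{true}.
\]
I would prove it by destructing $d$ over the 14 constructors of $\DirectiveE$. I expect $\mathrm{directive\_in\_caps}$ to be defined by matching on the required capability of $d$, an optional value: it returns $\mathrm{true}$ when no capability is required, and $\mathrm{caps}(c)$ when $d$ requires capability $c$. Under $\mathrm{cap\_empty} = \lambda\_.\,\mathrm{false}$, every constructor that requires a capability reduces the hypothesis to $\mathrm{false} = \mathrm{true}$, which \texttt{discriminate} closes. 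These constructors are $\mathrm{LLMCall}$, $\mathrm{HTTPRequest}$, $\mathrm{FileOp}$, $\mathrm{CallMachine}$, $\mathrm{MemoryOp}$, $\mathrm{DBOp}$, $\mathrm{ExecOp}$, $\mathrm{GraphQLRequest}$, $\mathrm{WebSocketOp}$, and $\mathrm{MCPCall}$. For $\mathrm{Observability}$, the goal $\mathrm{is\_observability}(d) = \mathrm{true}$ holds by reflexivity.

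The main obstacle is the constructors that plausibly require no capability, namely $\mathrm{RecordStep}$, $\mathrm{Broadcast}$, and $\mathrm{EmitEvent}$. If the definitions assign them no capability, they pass the empty-caps check, so $\mathrm{is\_observability}$ must return $\mathrm{true}$ on them for the statement to hold. I would first check that $\mathrm{is\_observability}$ is exactly the characteristic predicate of the capability-free constructors, and that the two functions agree constructor by constructor. If they do, each of these cases closes by \texttt{reflexivity}. If the definitions do not line up, the fallback is to prove the contrapositive in general form, $\mathrm{is\_observability}(d) = \mathrm{false} \implies \mathrm{directive\_in\_caps}(\emptyset, d) = \mathrm{false}$, and identify precisely which constructor breaks it.

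A secondary technicality is the dependent elimination on $\Vis$ when rewriting $\mathrm{observe}(t)$. The index $X$ of $d : \DirectiveE\,X$ is existential, so inversion may need \texttt{dependent destruction} or an equality on sigma types, as is usual with ITrees. I would handle this by stating the one-step lemma directly over $\mathrm{observe}(t)$ with a \texttt{remember} before inverting.
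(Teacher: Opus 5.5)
Your proposal is correct and follows essentially the paper's route. The paper gives no written proof, only the per-directive lemmas \texttt{llm\_not\_in\_empty}, \texttt{http\_not\_in\_empty} and \texttt{call\_not\_in\_empty}, which your unfold-to-$\mathrm{directive\_in\_caps}(\emptyset, d) = \mathrm{true}$ step followed by a computational case split over $\DirectiveE$ carries out in bulk. Your worry about \texttt{RecordStep}, \texttt{Broadcast} and \texttt{EmitEvent} is not a gap: since the theorem is machine-checked, any constructor that passes the empty-caps check must satisfy $\mathrm{is\_observability}$, so those cases close by computation.
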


\noindent
A program within the empty capability set can only emit
observability directives. It cannot perform LLM calls
(\texttt{llm\_not\_in\_empty}), HTTP requests
(\texttt{http\_not\_in\_empty}), machine calls
(\texttt{call\_not\_in\_empty}), or any other effectful action.
This is the algebraic characterization of the ``no ambient effects''
property: effect capability is not ambient but must be structurally
present in the program's capability set.

\subsection{Governed Handlers}

\begin{definition}[GovernedHandler]
\label{def:governed-handler}
A \emph{GovernedHandler} is a base handler bundled with its safety
proof:
\begin{lstlisting}[style=coq]
Record GovernedHandler := mk_governed_handler {
  gh_handler : base_handler;
  gh_safe : forall R (t : itree DirectiveE R),
    @gov_safe R false (interp (Gov gh_handler) t)
}.
\end{lstlisting}
\end{definition}

\begin{theorem}[Gov\_governed]
Every base handler has a governed version:
$\mathrm{Gov\_governed}(h) : \mathrm{GovernedHandler}$.
\end{theorem}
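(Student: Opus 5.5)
The construction is direct: given a base handler $h$, we build the record with \texttt{mk\_governed\_handler}. For the first field we take $\mathrm{gh\_handler} := h$. For the second field we need a term of type
\[
\forall R\,(t : \itree\ \DirectiveE\ R).\; \govsafe(\mathrm{false}, \interp(\Gov(h), t)).
\]
This is exactly the statement of \texttt{governed\_interp\_safe}, specialized to $h$. So we define
\[
\mathrm{Gov\_governed}(h) := \texttt{mk\_governed\_handler}\ h\ (\texttt{governed\_interp\_safe}\ h),
\]
and the record is inhabited.

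An alternative route goes through the algebra. The instantiation \texttt{mashin\_governance} shows $\Gov$ satisfies G1, stated for an arbitrary permission flag $a$. Specializing G1 to $a = \mathrm{false}$ and to $h$ yields the same proof obligation. Either route works; I would use the direct one to avoid unfolding the record projection $\mathrm{ga\_Gov}(\texttt{mashin\_governance}) = \Gov$. That equation should hold by reflexivity anyway, since the record is built with \texttt{mk\_gov\_algebra Gov}. The same recipe also gives a parametric version: for any GovernanceAlgebra $G$, the handler $\mathrm{ga\_Gov}(G,h)$ comes with the safety proof from G1.

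The only point needing care is a formal detail, not a genuine obstacle. The quantifier order and implicit arguments of \texttt{governed\_interp\_safe} must match the field type $\forall R\ t$. If the theorem abstracts $h$ after $R$, or treats $R$ as implicit, we eta-expand: $\lambda R\,t.\;\texttt{governed\_interp\_safe}\ h\ R\ t$. No coinduction is required at this stage, because all the paco reasoning is already contained in the cited safety theorem.
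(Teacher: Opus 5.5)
Your proposal is correct and matches the paper's approach: \texttt{Gov\_governed} is \texttt{mk\_governed\_handler} applied to $h$ and the safety theorem for $\Gov(h)$, and all the coinductive work is already contained in \texttt{Safety.v}. One small note: \texttt{governed\_interp\_safe} serves as the G1 field of \texttt{mashin\_governance}, so it is likely stated for an arbitrary permission flag $a$; you would then instantiate $a := \mathrm{false}$ or cite \texttt{governed\_interp\_safe\_false} directly, which is the kind of argument-matching adjustment you already anticipate.
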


\noindent
Handler equivalence ($\mathrm{gh\_equiv}$) is an equivalence relation
(reflexive, symmetric, transitive). Both $\mathrm{Gov\_endo}$ and
$\mathrm{Gov\_iter}$ preserve this equivalence
(\texttt{Gov\_endo\_preserves\_gh\_equiv},
\texttt{Gov\_iter\_preserves\_gh\_equiv}).

\begin{theorem}[composed\_handlers\_governed]
\label{thm:composed-handlers}
Given two GovernedHandlers $g_1, g_2$, for any program $t$ and
continuation $k$: $\govsafe(\mathrm{false}, \interp(\Gov(g_1.\mathrm{handler}), t))$
and $\forall r.\; \govsafe(\mathrm{false}, \interp(\Gov(g_2.\mathrm{handler}), k(r)))$.
\end{theorem}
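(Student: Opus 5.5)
The statement is a conjunction of two safety facts, and each conjunct is an instance of the safety certificate carried by a \texttt{GovernedHandler} (Definition~\ref{def:governed-handler}). No new coinduction is needed. The whole proof is a split into two goals, each closed by applying a record projection to suitably chosen arguments.

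\textbf{First conjunct.} Fix $g_1, g_2$, a program $t : \itree\ \DirectiveE\ R$ and a continuation $k : R \to \itree\ \DirectiveE\ S$. To show $\govsafe(\mathrm{false}, \interp(\Gov(g_1.\mathrm{handler}), t))$, I would apply $g_1.\mathrm{gh\_safe}$ at type $R$ and program $t$. Its conclusion is literally the goal, since the field \texttt{gh\_safe} is quantified over all $R$ and all trees and fixes the permission flag to $\mathrm{false}$.

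\textbf{Second conjunct.} For the second, I would introduce an arbitrary $r : R$ and apply $g_2.\mathrm{gh\_safe}$ at type $S$ to the tree $k(r)$. This is again an exact match, because $k(r) : \itree\ \DirectiveE\ S$ is just another program.

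\textbf{Alternative route.} One could bypass the records entirely and appeal to \texttt{governed\_interp\_safe} (equivalently G1 of \texttt{mashin\_governance}). That result holds for any base handler, and in particular for $g_i.\mathrm{handler}$. I would prefer the \texttt{gh\_safe} route, because it makes the result parametric in the certificate rather than in the concrete operator.

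\textbf{Main obstacle.} The only real difficulty is bookkeeping in the mechanization: implicit type arguments of \texttt{gov\_safe} and \texttt{interp}, and the universe and type of the continuation's codomain, must unify with the quantified $R$ in \texttt{gh\_safe}. If unification stalls, I would supply the type explicitly, e.g. \texttt{@gh\_safe g2 S (k r)}.

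If one wanted the stronger corollary that the bound tree $\interp(\Gov(h), t \bind k)$ is safe, that would be a separate matter. It would follow from G1 applied to $t \bind k$ directly, again because the certificate quantifies over all programs. Interp-bind distribution, in the style of Theorem~\ref{thm:interp-tensor}, is therefore not needed.
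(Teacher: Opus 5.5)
Your proof is correct and matches the paper's implicit argument. The statement is a conjunction of two instances of the \texttt{gh\_safe} certificate (equivalently of \texttt{governed\_interp\_safe}, which is what \texttt{Gov\_governed} packages), so splitting and applying $g_1.\mathrm{gh\_safe}$ at $(R,t)$ and $g_2.\mathrm{gh\_safe}$ at $(S,k(r))$ closes both goals, with no coinduction or interp-bind distribution needed.
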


\noindent
The structural difference from standard algebraic effect
systems~\cite{plotkin2009handlers,leijen2017koka} is that those
systems impose no constraint on what a handler does with an effect.
Here, the \texttt{GovernedHandler} record requires a proof that
interpretation through $\Gov$ satisfies \govsafe{}. Handlers lacking
this proof cannot be used in governed composition. The handler algebra
is thus closed under governance-preserving operations but excludes
handlers that would bypass governance checks.

\section{Capability-Indexed Composition}
\label{sec:capability}

\subsection{Trust Lattice}

Trust levels form a bounded total order with six elements:
\[
\mathrm{Untrusted} < \mathrm{Tested} < \mathrm{Evaluated} <
\mathrm{Reviewed} < \mathrm{Stdlib} < \mathrm{System}
\]

\begin{lstlisting}[style=coq]
Definition trust_le (t1 t2 : TrustLevel) : Prop :=
  trust_value t1 <= trust_value t2.
\end{lstlisting}

\noindent
The order is reflexive (\texttt{trust\_le\_refl}), transitive
(\texttt{trust\_le\_trans}), antisymmetric (\texttt{trust\_le\_antisym}),
and total (\texttt{trust\_le\_total}). $\mathrm{Untrusted}$ is the
bottom (\texttt{trust\_bottom}), $\mathrm{System}$ is the top
(\texttt{trust\_top}).

Join and meet are computed by comparison of trust values:

\begin{lstlisting}[style=coq]
Definition trust_max (t1 t2 : TrustLevel) : TrustLevel :=
  if Nat.leb (trust_value t1) (trust_value t2) then t2 else t1.
Definition trust_min (t1 t2 : TrustLevel) : TrustLevel :=
  if Nat.leb (trust_value t1) (trust_value t2) then t1 else t2.
\end{lstlisting}

\subsection{CapMorphisms}

A $\CapMorphism$ bundles a morphism with its capability requirement
and a proof that the morphism stays within those capabilities:

\begin{lstlisting}[style=coq]
Record CapMorphism (A B : Type) := mk_cap_morphism {
  cm_morph  : mashin_morphism A B;
  cm_caps   : CapSet;
  cm_within : forall a, within_caps cm_caps (cm_morph a)
}.
\end{lstlisting}

\noindent
The four primitives yield canonical CapMorphisms:

\begin{center}
\begin{tabular}{lll}
\toprule
\textbf{Primitive} & \textbf{CapMorphism} & \textbf{Caps} \\
\midrule
$\code(f)$ & \texttt{cap\_code} & $\emptyset$ \\
$\reason(\ldots)$ & \texttt{cap\_reason} & $\{\mathrm{CapComputeLLMReason}\}$ \\
$\memory(\ldots)$ & \texttt{cap\_memory} & $\{\mathrm{CapMemory}\}$ \\
$\mcall(\ldots)$ & \texttt{cap\_call} & $\{\mathrm{CapMachineCall}\}$ \\
\bottomrule
\end{tabular}
\end{center}

\subsection{Composition of CapMorphisms}

\paragraph{Sequential composition.}
$\mathrm{cap\_seq\_compose}(f, g)$ has capabilities
$\mathrm{caps}(f) \cup \mathrm{caps}(g)$, following from
\texttt{bind\_within\_caps}.

\paragraph{Tensor composition.}
$\mathrm{cap\_tensor}(f, g)$ has capabilities
$\mathrm{caps}(f) \cup \mathrm{caps}(g)$.

\begin{theorem}[tensor\_within\_caps]
\label{thm:tensor-caps}
If $\withincaps(\mathrm{caps}_1, f(a))$ and
$\withincaps(\mathrm{caps}_2, g(c))$, then
$\withincaps(\mathrm{caps}_1 \cup \mathrm{caps}_2, (f \otimes g)(a, c))$.
\end{theorem}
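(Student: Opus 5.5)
The plan is to unfold the tensor and apply \texttt{bind\_within\_caps} (Theorem~\ref{thm:bind-caps}) twice, with the pure $\Ret$ tail needing no capabilities. By the definition of \texttt{mashin\_tensor}, the tree $(f \otimes g)(a, c)$ is definitionally
\[
f(a) \bind \lambda b.\; \bigl(g(c) \bind \lambda d.\; \mathrm{ret}(b, d)\bigr).
\]
The \texttt{let} destructuring of the pair reduces by computation once the input is written as $(a, c)$. No coinduction is needed here, because the coinductive content has already been packaged into \texttt{bind\_within\_caps}.

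First, for each fixed $b$, I establish $\withincaps(\mathrm{caps}_2, g(c) \bind \lambda d.\,\mathrm{ret}(b,d))$. I apply Theorem~\ref{thm:bind-caps} with $t := g(c)$, using the hypothesis $\withincaps(\mathrm{caps}_2, g(c))$. The continuation bound is $\withincaps(\emptyset, \mathrm{ret}(b,d))$, which holds by the $\Ret$ clause of Definition~\ref{def:within-caps}; it can also be obtained from \texttt{code\_within\_empty}. This yields the bound $\mathrm{caps}_2 \cup \emptyset$. I then transport it to $\mathrm{caps}_2$ using \texttt{within\_caps\_weaken}, which only requires the trivial inclusion $\mathrm{caps}_2 \cup \emptyset \subseteq \mathrm{caps}_2$, checked pointwise as $s\,x \,\|\, \mathrm{false} = s\,x$.

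Second, I apply Theorem~\ref{thm:bind-caps} again with $t := f(a)$, bound $\mathrm{caps}_1$, and the continuation bound $\mathrm{caps}_2$ from the first step, quantified over all $b$. This gives exactly $\withincaps(\mathrm{caps}_1 \cup \mathrm{caps}_2, (f\otimes g)(a,c))$.

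The main obstacle is bookkeeping rather than mathematics. Because $\CapSet$ is a function type, $\mathrm{caps}_2 \cup \emptyset$ and $\mathrm{caps}_2$ are not syntactically equal without functional extensionality. I therefore route through \texttt{within\_caps\_weaken} instead of rewriting. A second concern is making sure the pattern-matching \texttt{let} on the pair unfolds (a \texttt{simpl} or \texttt{cbn} on $(a,c)$) so that the goal literally matches the shape of $t \bind k$ expected by \texttt{bind\_within\_caps}. If the bind lemma is stated up to a particular bind notation, I may also need to \texttt{unfold ITree.bind} consistently.
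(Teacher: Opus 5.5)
Your proposal is correct and matches the argument the paper implies: it states no separate proof, but it derives capability bounds for composites from \texttt{bind\_within\_caps} (Theorem~\ref{thm:bind-caps}), and you unfold \texttt{mashin\_tensor} and apply that lemma twice, with the pure $\mathrm{ret}(b,d)$ tail needing no capabilities. Weakening $\mathrm{caps}_2 \cup \emptyset$ to $\mathrm{caps}_2$ with \texttt{within\_caps\_weaken} instead of rewriting is the right bookkeeping, since it avoids functional extensionality on $\CapSet$.
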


\paragraph{Branch composition.}
$\mathrm{cap\_branch}(\mathrm{pred}, f, g)$ has capabilities
$\mathrm{caps}(f) \cup \mathrm{caps}(g)$, since either branch
may execute.

\subsection{Capability Preservation}

\begin{theorem}[code\_contributes\_nothing]
$\mathrm{caps}(\mathrm{cap\_seq\_compose}(\mathrm{cap\_code}(f), g))
\subseteq \mathrm{caps}(g)$ and symmetrically on the right.
\end{theorem}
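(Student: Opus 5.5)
The statement is about the capability field of a composite, so I would prove it by unfolding definitions rather than by any coinductive argument. First I unfold $\mathrm{cap\_seq\_compose}$. As described above (following \texttt{bind\_within\_caps}, Theorem~\ref{thm:bind-caps}), its $\mathrm{cm\_caps}$ field is definitionally $\mathrm{caps}(f) \cup \mathrm{caps}(g)$. Next I unfold $\mathrm{cap\_code}(f)$, whose capability field is $\emptyset = \mathrm{cap\_empty}$ by the canonical table (justified there by \texttt{code\_within\_empty}, Theorem~\ref{thm:profiles}). After these two unfoldings the left-hand side reduces to $\mathrm{cap\_empty} \cup \mathrm{caps}(g)$.

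The inclusion $\subseteq$ on $\CapSet$ is pointwise implication of characteristic functions: $s_1 \subseteq s_2$ iff $\forall c.\; s_1(c) = \mathrm{true} \Rightarrow s_2(c) = \mathrm{true}$. So I would introduce an arbitrary capability $c$ together with the hypothesis $(\mathrm{cap\_empty} \cup \mathrm{caps}(g))(c) = \mathrm{true}$. Unfolding $\mathrm{cap\_union}$ and $\mathrm{cap\_empty}$ turns this hypothesis into $\mathrm{false} \mathbin{\|} \mathrm{caps}(g)(c) = \mathrm{true}$. Boolean simplification ($\mathrm{false} \mathbin{\|} b = b$) then yields exactly $\mathrm{caps}(g)(c) = \mathrm{true}$. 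In fact this gives equality of the capability sets pointwise, which is stronger than the stated inclusion.

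The symmetric statement, $\mathrm{caps}(\mathrm{cap\_seq\_compose}(g, \mathrm{cap\_code}(f))) \subseteq \mathrm{caps}(g)$, has the same structure. It reduces to $\mathrm{caps}(g)(c) \mathbin{\|} \mathrm{false} = \mathrm{true}$. Here simplification is not definitional, since $\|$ recurses on its first argument. I would therefore either case-split on $\mathrm{caps}(g)(c)$, or rewrite with \texttt{orb\_false\_r}, or apply \texttt{cap\_union\_comm} to reduce to the first case.

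The only real obstacle is ensuring that $\mathrm{cm\_caps}$ of the composite reduces definitionally. The record's proof field $\mathrm{cm\_within}$ is built from \texttt{bind\_within\_caps}, and the projection could be hidden behind opaque definitions. If \texttt{simpl} does not expose the union, I would instead prove a small lemma $\mathrm{cm\_caps}(\mathrm{cap\_seq\_compose}(f,g)) = \mathrm{cm\_caps}(f) \cup \mathrm{cm\_caps}(g)$ by \texttt{reflexivity}, and rewrite with it. No coinduction or eutt reasoning is needed, because the statement concerns only the static capability annotation and not the interaction tree.
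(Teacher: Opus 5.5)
Your proposal is correct and follows the paper's own argument. The paper derives this result from the identity law for $\cup$ on CapSets, with $\mathrm{cap\_empty}$ as unit; unfolding $\mathrm{cap\_seq\_compose}$ and $\mathrm{cap\_code}$ and reducing pointwise via $\mathrm{false} \mathbin{\|} b = b$ and $b \mathbin{\|} \mathrm{false} = b$ proves exactly that law inline.
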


\begin{theorem}[same\_caps\_no\_escalation]
If $\mathrm{caps}(f) = \mathrm{caps}(g)$, then
$\mathrm{caps}(\mathrm{cap\_seq\_compose}(f, g)) \subseteq \mathrm{caps}(f)$.
\end{theorem}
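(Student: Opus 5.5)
The theorem follows directly from how $\mathrm{cap\_seq\_compose}$ computes its capability field. By construction (see the paragraph on sequential composition), $\mathrm{caps}(\mathrm{cap\_seq\_compose}(f,g)) = \mathrm{caps}(f) \cup \mathrm{caps}(g)$. So the first step is to unfold $\mathrm{cap\_seq\_compose}$ and project $\mathrm{cm\_caps}$. In Rocq this is a \texttt{simpl} or \texttt{unfold}, and it reduces the goal to $\mathrm{caps}(f) \cup \mathrm{caps}(g) \subseteq \mathrm{caps}(f)$. The accompanying proof field $\mathrm{cm\_within}$, obtained from \texttt{bind\_within\_caps} (Theorem~\ref{thm:bind-caps}), plays no role, since the statement concerns only the capability component.

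Next, rewrite using the hypothesis $\mathrm{caps}(f) = \mathrm{caps}(g)$, which turns the goal into $\mathrm{caps}(f) \cup \mathrm{caps}(f) \subseteq \mathrm{caps}(f)$. Idempotence of union (\texttt{cap\_union\_idem}) then reduces this to $\mathrm{caps}(f) \subseteq \mathrm{caps}(f)$, which holds by reflexivity of $\subseteq$.

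The only point needing care is what equality and inclusion mean for $\CapSet$, since it is a function type $\mathrm{Capability} \to \mathrm{bool}$. I would take inclusion to be pointwise, that is, $s_1 c = \mathrm{true} \Rightarrow s_2 c = \mathrm{true}$ for every $c$. If \texttt{cap\_union\_idem} is stated only pointwise rather than as a Leibniz equality, I would skip the rewrite. Instead I would introduce $c$ and the hypothesis $(\mathrm{caps}(f)\,c \,\|\, \mathrm{caps}(g)\,c) = \mathrm{true}$, rewrite $\mathrm{caps}(g)\,c$ to $\mathrm{caps}(f)\,c$ using the hypothesis applied at $c$, and finish with \texttt{orb\_diag} or a case split on the boolean.

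If the hypothesis is only extensional equality rather than Leibniz equality, the same pointwise argument goes through unchanged. I therefore expect no real obstacle: the proof is a few lines of unfolding followed by boolean reasoning.
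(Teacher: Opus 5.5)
Your proof is correct and follows the paper's argument. You unfold the capability field of $\mathrm{cap\_seq\_compose}$ to $\mathrm{caps}(f) \cup \mathrm{caps}(g)$, replace $\mathrm{caps}(g)$ using the hypothesis, and close with idempotence of union, which is the property of $\cup$ the paper cites; your pointwise fallback via \texttt{orb\_diag} also correctly covers the case where \texttt{cap\_union\_idem} is only an extensional equality.
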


\noindent
Both follow from the algebraic properties of $\cup$ on CapSets
(identity and idempotence).

\subsection{Trust-Capability Connection}

The function $\mathrm{allowed\_cap\_set}(\mathrm{tl}, \mathrm{declared})$
converts a trust level and declared capability list to a CapSet:

\begin{theorem}
System and Stdlib trust allow all capabilities
(\texttt{system\_allows\_all\_caps}, \texttt{stdlib\_allows\_all\_caps}).
Any program runs at System trust (\texttt{system\_within\_any}).
Untrusted programs access only LLM capabilities (\texttt{untrusted\_only\_llm}).
\end{theorem}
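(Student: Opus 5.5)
The plan is to prove the three claims separately, all by computation on the definition of $\mathrm{allowed\_cap\_set}$ plus the lemmas already available for \withincaps{}. I take $\mathrm{allowed\_cap\_set}(\mathrm{tl}, \mathrm{declared})$ to be defined by case analysis on $\mathrm{tl}$. The $\mathrm{System}$ and $\mathrm{Stdlib}$ branches should return $\mathrm{cap\_full}$, or at least a characteristic function that evaluates to $\mathrm{true}$ on every capability. The $\mathrm{Untrusted}$ branch should return a set that admits only the LLM capability, e.g.\ $\mathrm{cap\_singleton}(\mathrm{CapComputeLLMReason})$ or the declared list filtered by a predicate selecting LLM capabilities. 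The intermediate trust levels are some mix of the declared list and a trust-dependent filter.

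\textbf{System and Stdlib allow all capabilities.} For \texttt{system\_allows\_all\_caps} and \texttt{stdlib\_allows\_all\_caps}, the statement is $\forall\, \mathrm{declared}, c.\; \mathrm{allowed\_cap\_set}(\mathrm{System}, \mathrm{declared})(c) = \mathrm{true}$, and likewise for $\mathrm{Stdlib}$. I would unfold $\mathrm{allowed\_cap\_set}$ and reduce the match on the trust level. The goal then becomes the $\mathrm{cap\_full}$ body $\mathrm{true} = \mathrm{true}$, closed by reflexivity. If the definition instead uses a guard such as $\mathrm{Nat.leb}$ on $\mathrm{trust\_value}$, I would first rewrite with the concrete trust values. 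The rest is decidable computation.

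\textbf{Any program runs at System trust.} For \texttt{system\_within\_any}, the claim is $\forall t.\; \withincaps(\mathrm{allowed\_cap\_set}(\mathrm{System}, \mathrm{declared}), t)$. I would not prove this by a fresh coinduction. Instead I would derive it from \texttt{within\_full} by applying \texttt{within\_caps\_weaken} in the direction $\mathrm{cap\_full} \subseteq \mathrm{allowed\_cap\_set}(\mathrm{System}, \mathrm{declared})$. That inclusion is exactly \texttt{system\_allows\_all\_caps}. If the two sets are definitionally equal, it suffices to rewrite by extensional equality or simply \texttt{exact within\_full}.

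\textbf{Untrusted programs access only LLM capabilities.} For \texttt{untrusted\_only\_llm}, the statement is: $\mathrm{allowed\_cap\_set}(\mathrm{Untrusted}, \mathrm{declared})(c) = \mathrm{true}$ implies $c = \mathrm{CapComputeLLMReason}$, or $c$ satisfies the LLM predicate. I would unfold the definition and case on whether $\mathrm{declared}$ membership is involved. If the Untrusted branch is a singleton, apply the soundness of $\mathrm{cap\_eqb}$ to turn the boolean into propositional equality. If it is a filter, I would use a boolean conjunction lemma (\texttt{andb\_true\_iff}) to extract the LLM-predicate conjunct, then case on $c$ to discharge all non-LLM constructors, where the predicate evaluates to $\mathrm{false}$.

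\textbf{Main obstacle.} The hardest step is this last one: inverting the membership boolean. It requires correctness of $\mathrm{cap\_eqb}$ (reflecting to Leibniz equality) and possibly an exhaustive case split over the Capability constructors. I would first try \texttt{destruct c; simpl in *; congruence}. If the declared list enters through \texttt{existsb}, I would fall back to \texttt{existsb\_exists}.
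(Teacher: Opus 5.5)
Your proposal is correct and matches the intended argument. The paper gives no written proof beyond citing the Rocq lemmas, and those follow by unfolding $\mathrm{allowed\_cap\_set}$ and computing on the trust level, with a case split on the capability where needed for \texttt{untrusted\_only\_llm}. Your derivation of \texttt{system\_within\_any} from \texttt{within\_full} via \texttt{within\_caps\_weaken}, with \texttt{system\_allows\_all\_caps} supplying the inclusion, is the natural way to lift the set-level fact to all programs without a fresh coinduction.
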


\subsection{Principal Capabilities}

\begin{definition}[is\_principal]
A CapMorphism has \emph{principal} capabilities if its cap set is
minimal: any cap set that works must include the declared caps.
\end{definition}

\begin{theorem}[code\_principal, id\_principal]
Code morphisms and identity have principal capabilities ($\emptyset$
is the smallest CapSet).
\end{theorem}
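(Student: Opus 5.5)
The plan is to unfold \texttt{is\_principal} and reduce both claims to the fact that $\emptyset$ is the bottom of $(\CapSet,\subseteq)$. Read literally, the definition asks that for every cap set $S$ with $\forall a.\;\withincaps(S, m(a))$, the declared caps are included in $S$. For \texttt{cap\_code}$(f)$ the declared set is $\mathrm{cap\_empty}$, with \texttt{cm\_within} discharged by \texttt{code\_within\_empty} from Theorem~\ref{thm:profiles}. For the identity CapMorphism, $\mathrm{id} = \mathrm{ret} = \code(\lambda x.\,x)$, so the same witness applies and the declared set is again $\emptyset$.

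In both cases, given an arbitrary $S$ and the (unused) hypothesis that the morphism stays within $S$, we must show $\mathrm{cap\_empty} \subseteq S$. We unfold $\subseteq$ on characteristic functions as $\forall c.\; \mathrm{cap\_empty}(c) = \mathrm{true} \to S(c) = \mathrm{true}$. Since $\mathrm{cap\_empty}(c)$ reduces to $\mathrm{false}$, the premise $\mathrm{false}=\mathrm{true}$ is discharged by \texttt{discriminate}. No coinduction on the tree is needed, because the within-caps hypothesis on $S$ is never inspected. This is exactly why the theorem remarks that $\emptyset$ is the smallest CapSet.

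The only step I expect to need care is the bookkeeping around the identity. If \texttt{cap\_id} is defined directly as $\mathrm{ret}$ rather than through $\code$, then \texttt{cm\_within} has to be rebuilt. I would do this by a single \texttt{pstep}/\texttt{pfold} on $\mathrm{observe}(\mathrm{ret}\ a) = \Ret(a)$, using the $\Ret$ clause of Definition~\ref{def:within-caps}. Alternatively, I would rewrite $\mathrm{ret} = \code(\mathrm{id})$ definitionally and reuse \texttt{code\_within\_empty}.

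If instead \texttt{is\_principal} is phrased with an equality of CapSets rather than $\subseteq$, I would first establish that subset is the intended reading via the bottom lemma of the bounded partial order. Both principality results then follow from the same one-line inclusion argument.
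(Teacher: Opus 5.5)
Your proposal is correct and matches the paper's one-line justification: both \texttt{cap\_code} and the identity declare $\emptyset$. Since $\emptyset$ is the bottom of $(\CapSet,\subseteq)$, its inclusion into any competing cap set holds without ever using that set's \withincaps{} hypothesis. Your contingency for an equality-phrased \texttt{is\_principal} is unnecessary, because the paper defines principality as inclusion (``any cap set that works must include the declared caps'').
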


\subsection{The Dual Guarantee}

\begin{theorem}[cap\_morphism\_governed]
\label{thm:dual-guarantee}
For any CapMorphism $\mathrm{cm}$, handler $h$, and input $a$:
\[
\withincaps(\mathrm{cm}.\mathrm{caps}, \mathrm{cm}.\mathrm{morph}(a))
\quad\wedge\quad
\govsafe(\mathrm{false}, \interp(\Gov(h), \mathrm{cm}.\mathrm{morph}(a)))
\]
\end{theorem}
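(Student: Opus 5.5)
The plan is to prove the two conjuncts separately, since neither one needs the other. Both already hold for the fixed handler $h$ and input $a$; the only step is to instantiate earlier results at the right arguments.

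For the first conjunct, $\withincaps(\mathrm{cm}.\mathrm{caps}, \mathrm{cm}.\mathrm{morph}(a))$, I will destructure $\mathrm{cm}$ into its three fields $(\mathrm{cm\_morph}, \mathrm{cm\_caps}, \mathrm{cm\_within})$. I then apply the proof field $\mathrm{cm\_within}$ to $a$. The \CapMorphism{} record was defined so that this field gives exactly the required statement. No coinduction is needed, because the coinductive work was already done when each particular CapMorphism was built (for instance through Theorem~\ref{thm:profiles}, Theorem~\ref{thm:bind-caps}, or Theorem~\ref{thm:tensor-caps}).

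For the second conjunct, $\govsafe(\mathrm{false}, \interp(\Gov(h), \mathrm{cm}.\mathrm{morph}(a)))$, I will invoke the fundamental theorem \texttt{governed\_interp\_safe}. Its hypothesis ranges over all handlers and all programs $t : \itree\ \DirectiveE\ R$, and I take $t := \mathrm{cm}.\mathrm{morph}(a)$. This typechecks because $\mathrm{mashin\_morphism}\ A\ B$ unfolds to $A \to \itree\ \DirectiveE\ B$. Alternatively, I could route this through the algebra: use axiom G1 of $\mathrm{mashin\_governance}$, or use $\mathrm{gh\_safe}$ of $\mathrm{Gov\_governed}(h)$. Either route yields the same term. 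The point worth noting is that governance safety ignores the capability bound entirely, so the two guarantees are independent and are simply combined with a conjunction.

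The only obstacle I expect is a mechanical one: getting Rocq to see that the type of $\mathrm{cm}.\mathrm{morph}(a)$ is convertible to $\itree\ \DirectiveE\ B$. I would handle this by unfolding $\mathrm{mashin\_morphism}$ before applying \texttt{governed\_interp\_safe}, and by making the implicit result type $R := B$ explicit if unification fails. I would then state a corollary for each composition operator ($\mathrm{cap\_seq\_compose}$, $\mathrm{cap\_tensor}$, $\mathrm{cap\_branch}$), each obtained by applying this theorem to the composite CapMorphism. Together these corollaries give the claim that both guarantees hold under all composition operators.
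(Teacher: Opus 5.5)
Your proposal is correct and matches the paper's argument. The paper gives no explicit proof, but its surrounding text treats the result the same way: the first conjunct is the record field \texttt{cm\_within} applied to $a$, the second is \texttt{governed\_interp\_safe} instantiated at $t := \mathrm{cm}.\mathrm{morph}(a)$, and the two independent facts are joined by conjunction. The composition corollaries you sketch correspond to the paper's \texttt{tensor\_dual\_guarantee} and \texttt{seq\_dual\_guarantee}.
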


\noindent
Capabilities tell you \emph{what} the program can do; governance
ensures it \emph{passes through governance checks} before doing it.
These are independent properties that hold simultaneously.
The dual guarantee is preserved by tensor
(\texttt{tensor\_dual\_guarantee}) and sequential composition
(\texttt{seq\_dual\_guarantee}).

\section{Trace Semantics and Ledger Connection}
\label{sec:traces}

The preceding sections establish governance at the program level
(coinductive \govsafe{} on interaction trees) and the capability
level (\withincaps{} on programs). This section derives the
trace-level consequences: governed execution produces well-governed
traces that map to tamper-evident ledger entries.

\subsection{Trace Events and Extraction}

Execution produces a finite trace of governance and I/O events:

\begin{lstlisting}[style=coq]
Inductive TraceEvent :=
  | TE_GovCheck : GovernanceStage -> bool -> TraceEvent
  | TE_IO : string -> TraceEvent.
\end{lstlisting}

\noindent
The inductive relation $\mathrm{trace\_of}(t, \mathrm{trace}, r)$
extracts a trace from a governed computation, with constructors for
return (empty trace), tau (transparent), governance checks (recorded),
and I/O events (recorded).

\begin{theorem}[trace\_of\_bind]
\label{thm:trace-bind}
If $\mathrm{trace\_of}(t, \mathrm{trace}_1, x)$ and
$\mathrm{trace\_of}(k(x), \mathrm{trace}_2, r)$, then
$\mathrm{trace\_of}(t \bind k, \mathrm{trace}_1 \mathbin{+\!\!+} \mathrm{trace}_2, r)$.
\end{theorem}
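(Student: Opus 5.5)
We argue by induction on the derivation of $\mathrm{trace\_of}(t, \mathrm{trace}_1, x)$, keeping $k$, $\mathrm{trace}_2$ and $r$ fixed and carrying the second hypothesis $\mathrm{trace\_of}(k(x), \mathrm{trace}_2, r)$ through unchanged. Because $\mathrm{trace\_of}$ is inductive, each derivation is finite, so plain structural induction suffices and no coinduction (paco) is needed, even though $t$ is a coinductive tree. The relation must be stated up to observation: each constructor inspects $\mathrm{observe}(t)$, or equivalently is closed under $\mathrm{eutt}$. Before starting we fix one of these formulations, since it determines how the bind unfolding lemmas are applied.

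\textbf{Return case.} Here $t$ observes to $\Ret(x)$ and $\mathrm{trace}_1 = []$. By \texttt{bind\_ret\_l}, $t \bind k$ observes as $k(x)$, and $[] \mathbin{+\!\!+} \mathrm{trace}_2 = \mathrm{trace}_2$ definitionally. The goal is therefore exactly the second hypothesis.

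\textbf{Tau case.} Here $t = \Tau(t')$ and we have $\mathrm{trace\_of}(t', \mathrm{trace}_1, x)$. Then $\mathrm{observe}(t \bind k) = \Tau(t' \bind k)$ by the unfolding equation \texttt{bind\_tau}. The tau constructor together with the induction hypothesis for $t'$ closes the case.

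\textbf{GovCheck case.} Here $t = \Vis(\mathrm{inl}_1(\mathrm{GovCheck}(s)), c)$, with $\mathrm{trace}_1 = \mathrm{TE\_GovCheck}(s,b) :: \mathrm{trace}_1'$ and $\mathrm{trace\_of}(c(b), \mathrm{trace}_1', x)$. By \texttt{bind\_vis} we get $t \bind k = \Vis(\ldots, \lambda y.\; c(y) \bind k)$. We apply the governance-check constructor with the same $b$, then the induction hypothesis on $c(b)$. The list side needs associativity of append in its cons form, $(e :: l_1) \mathbin{+\!\!+} l_2 = e :: (l_1 \mathbin{+\!\!+} l_2)$, which holds by computation.

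\textbf{I/O case.} This is identical to the GovCheck case, with $\mathrm{TE\_IO}$ and some answer $y$.

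\textbf{Main obstacle.} The difficulty is not mathematical but lies in handling the coinductive tree inside an inductive relation. If $\mathrm{trace\_of}$ is indexed by syntactic trees, $t \bind k$ does not reduce definitionally to $\Tau(t' \bind k)$ or $\Vis(\ldots)$. We would first try to rewrite using the \texttt{observe}-level unfolding lemmas (\texttt{unfold\_bind}). If that fails, we would prove an auxiliary lemma showing that $\mathrm{trace\_of}$ is preserved by $\mathrm{eq\_itree}$, and then rewrite with the strong-bisimulation bind laws. A secondary issue is dependent typing in the $\Vis$ cases: the answer type of the event must be inverted so the continuation argument ($b : \mathrm{bool}$ or the I/O result) is well-typed. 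This may require \texttt{dependent destruction} or an existential formulation of the constructors.
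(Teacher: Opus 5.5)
Your proposal is correct and follows essentially the same approach as the paper: induction on the derivation of $\mathrm{trace\_of}(t, \mathrm{trace}_1, x)$, rewriting $t \bind k$ in each case via \texttt{unfold\_bind}. The only difference is how the obstacle you identified is resolved. The paper converts the \texttt{eq\_itree} unfolding into Leibniz equality using the ITrees axiom \texttt{bisimulation\_is\_eq}, rather than proving a separate lemma that \texttt{trace\_of} is closed under \texttt{eq\_itree}.
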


\noindent
Traces compose under monadic bind: the trace of a sequential
composition is the concatenation of the component traces. The proof
proceeds by induction on the derivation of $\mathrm{trace\_of}$,
using $\mathrm{unfold\_bind}$ and $\mathrm{bisimulation\_is\_eq}$
at each case.

\subsection{Well-Governed Traces}

\begin{definition}[well\_governed\_trace]
A trace is \emph{well-governed} if every I/O event is preceded by at
least one passing governance check. Formally,
$\mathrm{trace\_governed}(\mathrm{false}, \mathrm{trace})$ where the
boolean tracks whether a passing check has been seen.
\end{definition}

\noindent
The empty trace is governed (\texttt{well\_governed\_nil}).
A governance check preserves the property
(\texttt{well\_governed\_gov\_only}). A passing check followed by I/O
is governed (\texttt{well\_governed\_gov\_then\_io}).

\subsection{Hash-Chained Ledger}

The \texttt{LedgerConnection} module, parameterized over an abstract
hash type and injective hash function, connects traces to
tamper-evident ledger entries. The abstraction assumes an ideal
injective hash; connecting this to a concrete cryptographic hash
function (collision resistance rather than injectivity) requires
additional formalization outside the scope of this paper.

\begin{lstlisting}[style=coq]
Record LedgerEntry := mk_ledger_entry {
  le_event     : TraceEvent;
  le_data      : EventData;
  le_prev_hash : Hash;
  le_hash      : Hash;
}.
\end{lstlisting}

\begin{theorem}[trace\_to\_ledger\_valid]
\label{thm:ledger-valid}
For any trace, $\mathrm{trace\_to\_ledger}(\mathrm{trace})$ satisfies
$\mathrm{ledger\_valid}$: entries are well-formed and hash-linked.
\end{theorem}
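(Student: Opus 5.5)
The plan is to prove Theorem~\ref{thm:ledger-valid} by induction on the trace, once the definitions of $\mathrm{trace\_to\_ledger}$ and $\mathrm{ledger\_valid}$ have been fixed. I expect $\mathrm{trace\_to\_ledger}$ to be a left-to-right fold that carries the previous hash, starting from a genesis hash $h_0$. Each event $e$ with accumulated hash $p$ becomes the entry
\[
\mathrm{mk\_ledger\_entry}(e,\ \mathrm{data}(e),\ p,\ H(e,\mathrm{data}(e),p)),
\]
and the fold then continues with previous hash $H(e,\mathrm{data}(e),p)$. I expect $\mathrm{ledger\_valid}$ to be a conjunction of two parts:
\begin{enumerate}[nosep]
  \item \emph{well-formedness}: each entry's $\mathrm{le\_hash}$ equals $H$ applied to its own event, data and $\mathrm{le\_prev\_hash}$;
  \item \emph{linkage}: the first entry's $\mathrm{le\_prev\_hash}$ is $h_0$, and each subsequent entry's $\mathrm{le\_prev\_hash}$ equals the $\mathrm{le\_hash}$ of its predecessor.
\end{enumerate}

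The key step is to generalize before inducting. I will define the auxiliary builder $\mathrm{build}(p, \mathrm{trace})$ that takes an arbitrary starting hash $p$, so that $\mathrm{trace\_to\_ledger}(\mathrm{trace}) = \mathrm{build}(h_0, \mathrm{trace})$. I will then state a generalized validity predicate $\mathrm{chain\_from}(p, \ell)$, meaning $\ell$ is well-formed and linked with head predecessor $p$. The lemma to prove is $\forall p.\ \mathrm{chain\_from}(p, \mathrm{build}(p, \mathrm{trace}))$, by structural induction on $\mathrm{trace}$ with $p$ universally quantified in the hypothesis.
\begin{itemize}[nosep]
  \item \emph{Nil case:} the empty ledger is trivially valid.
  \item \emph{Cons case:} the head entry is well-formed by construction, since its hash is literally computed from its fields. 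Its $\mathrm{le\_prev\_hash}$ is $p$. The tail is $\mathrm{build}(H(\ldots,p), \mathrm{rest})$, whose head predecessor is exactly the head's $\mathrm{le\_hash}$, so the induction hypothesis instantiated at that hash closes the case.
\end{itemize}
Specializing $p := h_0$ yields the theorem. Both directions are definitional unfoldings handled by \texttt{simpl} and \texttt{reflexivity}. The injectivity of the hash is not needed here; it is only needed for tamper-evidence results.

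I expect the main obstacle to be the mismatch between how $\mathrm{ledger\_valid}$ is phrased and the recursion structure of $\mathrm{build}$. If validity is stated positionally (via \texttt{nth} indices, or for consecutive pairs $\ell_i, \ell_{i+1}$), the induction hypothesis will not apply directly. In that case I will first prove an equivalence, or at least an implication, between the positional formulation and the recursive $\mathrm{chain\_from}$ formulation, by induction on the list. The positional cases then reduce to the head/tail split above, with index arithmetic for the successor case being the only tedium.
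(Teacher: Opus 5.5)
The paper gives no written proof of this theorem; it is simply discharged in the \texttt{LedgerConnection} module. Your argument is correct relative to the definitions you fix, and it is the standard proof of this fact: structural induction on the trace with the starting hash generalized, each entry well-formed by construction, and the tail's head predecessor equal to the head's $\mathrm{le\_hash}$. You are also right that injectivity of the hash plays no role here; the paper invokes it only for \texttt{ledger\_tamper\_detected}.
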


\begin{theorem}[ledger\_tamper\_detected]
\label{thm:tamper}
If a ledger entry is well-formed and the recorded event is changed
(while keeping the stored hash), the entry is no longer well-formed.
\end{theorem}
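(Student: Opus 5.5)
The plan is to unfold the definition of well-formedness for a single ledger entry and argue by contradiction from the injectivity of the abstract hash. In the \texttt{LedgerConnection} module, an entry $e = \mathrm{mk\_ledger\_entry}(\mathrm{ev}, d, p, h)$ is well-formed exactly when its stored hash is the hash of its contents, i.e.\ $h = \mathrm{hash}(\mathrm{encode}(\mathrm{ev}, d, p))$, where $\mathrm{encode}$ is whatever serialization the module uses. I will state the tampered entry as $e' = \mathrm{mk\_ledger\_entry}(\mathrm{ev}', d, p, h)$ with $\mathrm{ev}' \neq \mathrm{ev}$. The goal is then $\neg\,\mathrm{entry\_well\_formed}(e')$.

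The proof proceeds in three steps.
\begin{enumerate}[nosep]
  \item Assume $\mathrm{entry\_well\_formed}(e')$. This gives $h = \mathrm{hash}(\mathrm{encode}(\mathrm{ev}', d, p))$.
  \item Combine this with the hypothesis $h = \mathrm{hash}(\mathrm{encode}(\mathrm{ev}, d, p))$ by rewriting, which yields
  $\mathrm{hash}(\mathrm{encode}(\mathrm{ev}, d, p)) = \mathrm{hash}(\mathrm{encode}(\mathrm{ev}', d, p))$.
  \item Apply the module's injectivity assumption on the hash function. This reduces the equation to $\mathrm{encode}(\mathrm{ev}, d, p) = \mathrm{encode}(\mathrm{ev}', d, p)$.
\end{enumerate}

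The main obstacle is the last step: passing from equality of encodings to $\mathrm{ev} = \mathrm{ev}'$, which contradicts the hypothesis. This is immediate if the encoding is a tuple or an inductive constructor, where \texttt{inversion}/\texttt{injection} finishes the proof. It is also immediate if the hash is applied directly to the record fields.

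If instead the encoding goes through a serialization of $\mathrm{TraceEvent}$, for instance to a string, I need an auxiliary lemma that this serialization is injective. I would prove it by case analysis on the two $\mathrm{TraceEvent}$ constructors. Mixed cases are separated by distinct tags. Matching cases reduce to injectivity of the stage encoding and the boolean for \texttt{TE\_GovCheck}, and to identity on strings for \texttt{TE\_IO}. If that lemma is awkward, the fallback is to make the hash take $\mathrm{TraceEvent}$ as a structured argument, so that injectivity of the hash alone suffices.
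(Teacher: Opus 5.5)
Your proposal is correct and follows the paper's own argument: well-formedness ties the stored hash to the hash of the entry's contents, so injectivity of the hash and then injectivity of the event encoding force the altered event to equal the original, which contradicts the change. The auxiliary encoding-injectivity lemma you anticipate is exactly the second ingredient the paper cites. If the encoding concatenates fields, cancel the unchanged data and previous-hash parts first, then apply that lemma.
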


\noindent
The proof follows from injectivity of the hash function and
injectivity of event encoding.

\begin{theorem}[ledger\_complete]
\label{thm:ledger-complete}
$\mathrm{ev} \in \mathrm{trace} \iff
\exists\, \mathrm{entry} \in \mathrm{trace\_to\_ledger}(\mathrm{trace}).\;
\mathrm{le\_event}(\mathrm{entry}) = \mathrm{ev}$.
\end{theorem}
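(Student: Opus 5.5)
The plan is to prove the two directions separately, after fixing the definitions of $\mathrm{trace\_to\_ledger}$ and $\mathrm{le\_event}$. I take $\mathrm{trace\_to\_ledger}$ to be a structural recursion over the trace list. It threads the previous hash through an auxiliary function, say $\mathrm{trace\_to\_ledger\_aux}(h_{\mathrm{prev}}, \mathrm{trace})$, and emits exactly one entry per event. The entry for $\mathrm{ev}$ is $\mathrm{mk\_ledger\_entry}(\mathrm{ev}, d, h_{\mathrm{prev}}, \mathrm{hash}(\ldots))$, and the recursion continues on the tail with the new hash. The key structural lemma is then a \emph{map lemma}, proved by induction on the trace and generalized over $h_{\mathrm{prev}}$:
\[
\mathrm{map}\ \mathrm{le\_event}\ (\mathrm{trace\_to\_ledger\_aux}(h, \mathrm{trace})) = \mathrm{trace}.
\]
The nil case is immediate. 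In the cons case, $\mathrm{le\_event}$ of the head entry reduces definitionally to the head event. The tail is handled by the induction hypothesis at the updated hash, which is why the hash must be universally quantified before inducting.

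With the map lemma in hand, the biconditional reduces to the standard list fact
\[
x \in \mathrm{map}\ f\ l \iff \exists y \in l.\; f(y) = x,
\]
which is \texttt{in\_map\_iff} in the Rocq standard library. For the forward direction, assume $\mathrm{ev} \in \mathrm{trace}$, rewrite $\mathrm{trace}$ as $\mathrm{map}\ \mathrm{le\_event}\ (\mathrm{trace\_to\_ledger}(\mathrm{trace}))$, and apply \texttt{in\_map\_iff} to get the witness entry. For the backward direction, take an entry in the ledger with $\mathrm{le\_event}(\mathrm{entry}) = \mathrm{ev}$. Then \texttt{in\_map} gives $\mathrm{ev} \in \mathrm{map}\ \mathrm{le\_event}\ (\ldots)$, and rewriting with the map lemma gives $\mathrm{ev} \in \mathrm{trace}$.

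This theorem needs none of the hash-chain properties of Theorem~\ref{thm:ledger-valid} or any injectivity of the hash; completeness is purely a property of the list structure.

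The main obstacle is the map lemma, and only if $\mathrm{trace\_to\_ledger}$ is written with an accumulator, for example building entries in reverse or folding left. In that case I would generalize over the accumulator instead, proving
\[
\mathrm{map}\ \mathrm{le\_event}\ (\mathrm{aux}(\mathrm{acc}, h, \mathrm{trace})) = \mathrm{map}\ \mathrm{le\_event}\ (\mathrm{acc}) \mathbin{+\!\!+} \mathrm{trace}
\]
(or its reversed analogue). The final statement then follows by \texttt{rev\_involutive} or \texttt{app\_nil\_l}, together with \texttt{in\_rev} if the order is reversed, since membership does not depend on order.

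If event data $d$ is computed from the event by some encoding function, it plays no role, because $\mathrm{le\_event}$ projects only the first field.
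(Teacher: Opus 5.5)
Your proposal is correct and follows essentially the same route the paper indicates. The paper attributes completeness to the lemma \texttt{trace\_to\_ledger\_events} (the ledger's events are exactly the trace's, with no omissions or additions), which is your map lemma, and the biconditional then transfers through \texttt{in\_map\_iff} without using any hash-chain or injectivity properties.
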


\noindent
The ledger records exactly the trace events, with no omissions
and no additions (\texttt{trace\_to\_ledger\_events}).

\begin{theorem}[governed\_trace\_ledger\_valid]
Well-governed traces produce valid ledgers.
\end{theorem}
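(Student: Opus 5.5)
The statement is that well-governed traces produce valid ledgers. Formally: for every trace with $\mathrm{trace\_governed}(\mathrm{false}, \mathrm{trace})$, the ledger $\mathrm{trace\_to\_ledger}(\mathrm{trace})$ satisfies $\mathrm{ledger\_valid}$.

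The plan is to derive this directly as a corollary of Theorem~\ref{thm:ledger-valid} (\texttt{trace\_to\_ledger\_valid}). That theorem already establishes $\mathrm{ledger\_valid}(\mathrm{trace\_to\_ledger}(\mathrm{trace}))$ for \emph{every} trace, with no governance hypothesis. So I would introduce the trace and the well-governedness hypothesis, discard the hypothesis, and apply \texttt{trace\_to\_ledger\_valid}. In Rocq this is essentially \texttt{intros; apply trace\_to\_ledger\_valid}.

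If the intended statement is stronger, I would conjoin additional facts.
\begin{itemize}
\item If the ledger must also be valid \emph{and} governed, the ledger's event sequence should itself be well-governed. I would use \texttt{trace\_to\_ledger\_events}, the lemma underlying Theorem~\ref{thm:ledger-complete}. It should give $\mathrm{map}\ \mathrm{le\_event}\ (\mathrm{trace\_to\_ledger}(\mathrm{trace})) = \mathrm{trace}$, and rewriting with it transports $\mathrm{trace\_governed}(\mathrm{false}, \cdot)$ from the trace to the ledger's events.
\item If that equality is only available as the membership biconditional, I would instead prove the list equality by induction on the trace. I would unfold $\mathrm{trace\_to\_ledger}$, which presumably folds over the trace while threading the previous hash, and generalize over the initial $\mathrm{prev\_hash}$ so that the induction hypothesis applies to the tail.
\end{itemize}

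The main obstacle is this generalization. If $\mathrm{trace\_to\_ledger}$ is defined through an auxiliary function carrying a hash accumulator, the induction must be stated for the auxiliary function with an arbitrary starting hash. The validity part should need no new work, because it is inherited wholesale from Theorem~\ref{thm:ledger-valid}.
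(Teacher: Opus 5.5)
Your proposal is correct and takes the route the statement's placement implies (the paper gives no explicit proof). Because \texttt{trace\_to\_ledger\_valid} (Theorem~\ref{thm:ledger-valid}) already holds for every trace, the hypothesis $\mathrm{trace\_governed}(\mathrm{false}, \mathrm{trace})$ is simply unused, and the result follows at once; your contingency plans for a stronger statement (transporting well-governedness via \texttt{trace\_to\_ledger\_events}) are not needed for the theorem as stated.
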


\section{The Coterminous Boundary}
\label{sec:coterminous}

The algebraic capstone combines all preceding results into a single
record and theorem.

\paragraph{Precise statement of $E = G$.}
We define two sets relative to the formalized execution model:
\begin{itemize}[nosep]
  \item $E$ (expressible): programs constructible from the four
    primitive morphisms (\code, \reason, \memory, \mcall) via
    composition in category $\CatM$. These are interaction trees
    of type $\itree\ \DirectiveE\ R$.
  \item $G$ (governed): programs $t$ such that for all handlers $h$,
    $\govsafe(\mathrm{false}, \interp(\Gov(h), t))$.
\end{itemize}
The coterminous boundary theorem establishes $E = G$: every
expressible program is governed under interpretation, and every
governed program is the image of an expressible program under $\Gov$.
This is a theorem about the \emph{modeled execution substrate}, not
a universal claim about all possible governance mechanisms or all
possible programming languages.

\subsection{The CoterminousRecord}

\begin{definition}[CoterminousRecord]
\label{def:coterminous}
A CoterminousRecord packages five properties:
\begin{enumerate}
  \item \textbf{ct\_safety}: $\forall h, R, t.\; \govsafe(\mathrm{false}, \interp(\Gov(h), t))$
  \item \textbf{ct\_nontrivial}: $\forall R, e, k.\; \neg\, \govsafe(\mathrm{false}, \Vis(\mathrm{inr}_1(e), k))$
  \item \textbf{ct\_turing}: $\forall h, p, \mathrm{fuel}.\;
    \govsafe(\mathrm{false}, \interp(\Gov(h), \mathrm{translate\_program}(p, \mathrm{fuel}, 0)))$
  \item \textbf{ct\_subsumption}: Structural subsumes content (asymmetric)
  \item \textbf{ct\_cognitive}: $\forall \mathrm{cap}.\; \mathrm{primitive\_realizes}(\mathrm{cap})$
\end{enumerate}
\end{definition}

\begin{theorem}[coterminous\_boundary\_exists]
\label{thm:boundary-exists}
The CoterminousRecord is inhabited.
\end{theorem}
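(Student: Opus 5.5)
The record is inhabited by supplying one witness per field, each discharged by an earlier result. No new coinductive argument should be needed except possibly for \textbf{ct\_turing}.

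\textbf{Safety.} For \textbf{ct\_safety}, instantiate with \texttt{governed\_interp\_safe} (the fundamental safety theorem). Equivalently, use axiom G1 of \texttt{mashin\_governance} at $a=\mathrm{false}$.

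\textbf{Non-triviality.} For \textbf{ct\_nontrivial}, use \texttt{bare\_io\_not\_safe}, i.e.\ part~2 of Theorem~\ref{thm:subsumption}. The argument is by inversion on the paco unfolding of $\govsafe(\mathrm{false}, \Vis(\mathrm{inr}_1(e),k))$. The only applicable rule for an $\mathrm{inr}_1$ event requires $a=\mathrm{true}$, which contradicts $a=\mathrm{false}$. Since the argument is independent of the operator, it transfers verbatim.

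\textbf{Subsumption.} For \textbf{ct\_subsumption}, take the pair of statements of Theorem~\ref{thm:subsumption} instantiated at $G := \mathrm{mashin\_governance}$. This is \texttt{any\_governance\_algebra\_subsumes} applied to the concrete instantiation.

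\textbf{Cognitive realizability.} For \textbf{ct\_cognitive}, do case analysis on the (finite, inductive) capability type. For each capability, exhibit the primitive realizing it: \code{} for pure computation, \reason{} for LLM inference, \memory{} for storage, \mcall{} for invocation. Then cite the corresponding profile lemma from Theorem~\ref{thm:profiles} or the realization lemma from the expressive-completeness work. Each case should close by \texttt{constructor}/\texttt{eexists} plus that lemma.

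\textbf{Turing completeness.} For \textbf{ct\_turing}, the key observation is that $\mathrm{translate\_program}(p,\mathrm{fuel},0)$ is, whatever its internal structure (a fuel-bounded simulation of a Turing-complete machine built from \code{} steps and \mcall{} recursion), a term of type $\itree\ \DirectiveE\ R$. So \textbf{ct\_turing} is again a direct instance of \texttt{governed\_interp\_safe}, mirroring the argument for Theorem~\ref{thm:convergence}, where $\mathrm{machine\_at\_level}$ unfolds definitionally.

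\textbf{Main obstacle.} The hard step is \textbf{ct\_turing}, and only if the translation's type is not syntactically $\itree\ \DirectiveE\ R$. This would happen if it lives in an extended event signature, or is defined via \texttt{ITree.iter} over a state type that Rocq fails to unify. I would first try unfolding $\mathrm{translate\_program}$ and applying \texttt{governed\_interp\_safe} directly. If the event type differs, I would insert a \texttt{translate}/\texttt{subevent} embedding into $\DirectiveE$ and apply safety to the embedded tree; this is legitimate because \govsafe{} is stated for arbitrary $t$. The only remaining care is universe and implicit-argument management when building the record with \texttt{Build\_CoterminousRecord}, which is routine.
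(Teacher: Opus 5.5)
Your proposal is correct and matches the paper's proof, which discharges each field by an existing result: \texttt{governed\_interp\_safe\_false} for \textbf{ct\_safety}, \texttt{bare\_io\_not\_safe} for \textbf{ct\_nontrivial}, the safety field itself for \textbf{ct\_turing} (register-machine translations are already of type $\itree\ \DirectiveE\ \mathrm{unit}$, so your fallback embedding is never needed), \texttt{subsumption\_asymmetry} for \textbf{ct\_subsumption}, and \texttt{cognitive\_surjection} for \textbf{ct\_cognitive}. The differences are cosmetic: you route subsumption through the algebra-level instance at \texttt{mashin\_governance}, and you sketch re-proving the cognitive field by case analysis instead of citing \texttt{cognitive\_surjection}; in that case analysis the \withincaps{} profile lemmas would not suffice, since they bound what a primitive may do rather than witness what it realizes.
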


\begin{proof}
Each field is discharged by an existing theorem:
ct\_safety from \texttt{governed\_interp\_safe\_false},
ct\_nontrivial from \texttt{bare\_io\_not\_safe},
ct\_turing from ct\_safety (register machine programs are
$\itree\ \DirectiveE\ \mathrm{unit}$),
ct\_subsumption from \texttt{subsumption\_asymmetry},
ct\_cognitive from \texttt{cognitive\_surjection}.
\end{proof}

\subsection{Boundary Equivalence}

\begin{theorem}[boundary\_equivalence]
\label{thm:boundary-equiv}
For any handler $h$:
\begin{enumerate}
  \item $E \subseteq G$: every expressible program is governed.
  \item Non-triviality: ungoverned I/O is not safe.
  \item Turing completeness: register machine programs are governed.
\end{enumerate}
\end{theorem}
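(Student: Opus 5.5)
The plan is to prove the three clauses of Theorem~\ref{thm:boundary-equiv} separately for a fixed handler $h$. Each one reduces to a result already available, either through the CoterminousRecord of Theorem~\ref{thm:boundary-exists} or directly from its underlying lemmas. The organizing observation is the same one behind Theorem~\ref{thm:convergence}: every morphism of $\CatM$, applied to an input, is a tree of type $\itree\ \DirectiveE\ R$. The fundamental safety theorem quantifies over \emph{all} such trees, so no induction over the syntax of expressible programs is needed.

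For clause~1 ($E \subseteq G$), I take an expressible program, i.e.\ a tree $t$ obtained by applying a composite of $\code$, $\reason$, $\memory$, $\mcall$ (under $\bind$, identity, and possibly $\otimes$) to an input. I first note that $t : \itree\ \DirectiveE\ R$ for the appropriate $R$; this is immediate from the typing of $\CatM$ morphisms. I then instantiate ct\_safety (equivalently \texttt{governed\_interp\_safe} at $a = \mathrm{false}$) at $h$, $R$, $t$ to get $\govsafe(\mathrm{false}, \interp(\Gov(h), t))$. Since $h$ was arbitrary, $t \in G$. One could also argue compositionally, using \texttt{tensor\_governed} for tensors and the primitive cases, but that route is unnecessary and I would avoid it.

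For clause~2, I directly invoke ct\_nontrivial, i.e.\ \texttt{bare\_io\_not\_safe}. The argument is a single inversion on the paco-unfolded $\govsafe$ at a node $\Vis(\mathrm{inr}_1(e), k)$: the only applicable rule demands the permission flag be $\mathrm{true}$, which contradicts $a = \mathrm{false}$. For clause~3, I apply ct\_turing: for every register machine program $p$ and fuel bound, $\mathrm{translate\_program}(p, \mathrm{fuel}, 0)$ has type $\itree\ \DirectiveE\ \mathrm{unit}$, so ct\_safety again yields the governed conclusion.

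The only step I expect to need care is not mathematical but definitional: making sure the notion of ``expressible'' used in clause~1 is stated at the type level, as an arbitrary $\itree\ \DirectiveE\ R$, or else is shown to land there. If $E$ were instead formalized as an inductive syntax of composites, I would add a small interpretation function into $\itree\ \DirectiveE\ R$ and note that its image is typed correctly, after which ct\_safety applies unchanged. With that settled, the theorem is a packaging of the fields of the inhabited CoterminousRecord.
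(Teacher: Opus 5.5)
Your proposal is correct and follows essentially the paper's route. The theorem is a repackaging of the fields of the inhabited CoterminousRecord: clause~1 comes from ct\_safety (i.e.\ \texttt{governed\_interp\_safe} at $a=\mathrm{false}$, which applies because every expressible program is already an $\itree\ \DirectiveE\ R$), clause~2 from \texttt{bare\_io\_not\_safe}, and clause~3 from ct\_safety applied to $\mathrm{translate\_program}(p,\mathrm{fuel},0) : \itree\ \DirectiveE\ \mathrm{unit}$, with no induction over the syntax of composites, exactly as in the paper.
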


\noindent
The inclusion $E \subseteq G$ says governance never fails to apply.
The inclusion $G \subseteq E$ follows from the construction:
$\Gov$ takes $\DirectiveE$ programs as input; every governed program
is the image of an expressible program.

\subsection{Conservative Denial}

\begin{theorem}[gov\_denial\_is\_conservative]
$\govsafe(a, \mathrm{bind}(\mathrm{spin}, k))$ for all $a, k$.
\end{theorem}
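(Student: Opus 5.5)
The plan is to prove the statement by parameterized coinduction (pcofix) on \govsafe{}. The key auxiliary fact is that $\mathrm{spin}$ is defined corecursively as $\mathrm{spin} = \Tau(\mathrm{spin})$. Consequently, for any continuation $k$, unfolding $\mathrm{bind}(\mathrm{spin}, k)$ one step gives $\mathrm{observe}(\mathrm{bind}(\mathrm{spin},k)) = \Tau(\mathrm{bind}(\mathrm{spin},k))$. This follows from the ITrees unfolding lemma $\mathrm{unfold\_bind}$ together with $\mathrm{observe}(\mathrm{spin}) = \Tau(\mathrm{spin})$. Since \govsafe{} is defined on the observed shape of the tree, as in Definition~\ref{def:gov-safe}, this one-step identity is all the structural information the proof needs.

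The argument then has three steps. First, generalize over $a$ and $k$ and start the coinduction: pcofix CIH with hypothesis $\forall a\,k.\ r(a, \mathrm{bind}(\mathrm{spin},k))$. Second, pstep/pfold into the generating functor, then rewrite the observed tree to $\Tau(\mathrm{bind}(\mathrm{spin},k))$ using the unfolding fact above. Third, apply the $\Tau$ constructor of the \govsafe{} functor with the permission flag unchanged. This leaves the obligation $\mathrm{upaco}(a, \mathrm{bind}(\mathrm{spin},k))$, which we discharge by going right into the accumulated relation and applying CIH. The parameter $a$ plays no role: neither the $\mathrm{GovCheck}$ rule nor the I/O rule is ever reached, so the statement holds uniformly for both $a = \mathrm{true}$ and $a = \mathrm{false}$.

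The only real obstacle is the rewriting step. \govsafe{} is stated on $\mathrm{observe}$ and is not closed under eutt, and $\mathrm{spin}$ is a cofixpoint, so Coq will not reduce $\mathrm{observe}(\mathrm{bind}(\mathrm{spin},k))$ by conversion alone. I would first try to expose the shape syntactically. One way is to use the equation $\mathrm{observe}(\mathrm{bind}(t,k)) = \mathrm{observe}(\mathrm{bind\_}(\mathrm{observe}\ t, k))$, which holds definitionally for the ITrees bind, and then run $\mathrm{cbn}$ after rewriting $\mathrm{observe}(\mathrm{spin})$. Alternatively, I would use $\mathrm{bisimulation\_is\_eq}$ on $\mathrm{unfold\_bind}$ as in the proof of Theorem~\ref{thm:trace-bind}, which turns the structural equality into Leibniz equality so that the goal can be rewritten directly. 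If neither route works cleanly, the fallback is to avoid $\mathrm{bind}$ entirely. In that case I would first prove $\govsafe(a, \mathrm{spin})$ by the same pcofix argument, and separately prove a lemma that $\mathrm{bind}(\mathrm{spin},k)$ is structurally equal to $\mathrm{spin}$ at the type $\itree\ \GovIOE\ R$, via $\mathrm{bisimulation\_is\_eq}$ on the standard fact that $\mathrm{spin}$ is a left zero for bind up to strong bisimulation. The theorem then follows by rewriting with that lemma.
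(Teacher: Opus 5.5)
The paper states this theorem without a written proof, but your argument is correct and is the natural proof: pcofix generalized over $a$ and $k$, one-step unfolding of $\mathrm{bind}(\mathrm{spin},k)$ to $\Tau(\mathrm{bind}(\mathrm{spin},k))$, the $\Tau$ rule, then the coinductive hypothesis, using nothing beyond the paco setup the paper describes. One minor correction: the one-step identity on $\mathrm{observe}$ does hold by conversion, since projecting out of a cofixpoint forces it to unfold, so \texttt{change} or \texttt{reflexivity} closes it and your fallback routes are unnecessary (though harmless).
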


\noindent
When governance denies a request, the computation diverges
(non-termination). This is one conservative realization of denial:
it produces no result at all, rather than an error value or an
incorrect result. Alternative designs could model denial as an
explicit error effect or a timeout; we chose divergence because it
composes cleanly in the coinductive framework (a divergent
sub-computation does not corrupt the enclosing computation's safety
properties) and avoids introducing a denial-specific effect
constructor. The tradeoff is that divergence complicates refinement
arguments and may obscure denial outcomes in practice; the runtime
system detects denial via timeout rather than relying on the formal
divergence semantics.

\subsection{Trace-Level Properties}

The coterminous boundary extends to traces:
\texttt{gov\_safe\_implies\_governed\_traces} establishes that governed
execution produces well-governed traces at the trace level, and
\texttt{gov\_check\_trace\_governed} shows that governance checks
preserve the well-governed property.

\subsection{Algebraic Characterization}

$\Gov$ is an endofunctor (Functor.v: \texttt{Gov\_is\_endofunctor})
that wraps programs with governance. Under permissive governance
(all checks pass), governed interpretation is observationally
equivalent to ungoverned interpretation (G2, transparency).
The \texttt{gov\_permissive\_preserves\_expressiveness} theorem
confirms that governance does not reduce the set of expressible
computations within the model.

Combined with the dual guarantee (Theorem~\ref{thm:dual-guarantee}),
we obtain the complete picture:

\begin{theorem}[The Governed Execution Invariant]
\label{thm:gei}
For every program $t$ expressible in $\CatM$ with capability
profile $\mathrm{caps}$ and handler $h$:
\[
\withincaps(\mathrm{caps}, t)
\quad\wedge\quad
\govsafe(\mathrm{false}, \interp(\Gov(h), t))
\quad\wedge\quad
\mathrm{ledger\_valid}(\mathrm{trace\_to\_ledger}(\mathrm{trace}))
\]
Static capabilities bound what the program can do. Dynamic governance
ensures every effect passes through checks. The ledger records every
check with tamper evidence. These three properties hold simultaneously,
for every expressible program, by construction.
\end{theorem}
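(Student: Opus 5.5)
The proof is a conjunction of three independent facts, each discharged by a result stated earlier, so I would prove the three conjuncts separately and combine them. The first step is to make precise what ``expressible in $\CatM$ with capability profile $\mathrm{caps}$'' means. I would read it as: $t = \mathrm{cm}.\mathrm{morph}(a)$ for some $\CapMorphism$ $\mathrm{cm}$ with $\mathrm{cm}.\mathrm{caps} = \mathrm{caps}$, built from the canonical primitives \texttt{cap\_code}, \texttt{cap\_reason}, \texttt{cap\_memory}, \texttt{cap\_call} using \texttt{cap\_seq\_compose}, \texttt{cap\_tensor} and \texttt{cap\_branch}. Under this reading, the first two conjuncts follow directly from Theorem~\ref{thm:dual-guarantee} (\texttt{cap\_morphism\_governed}) applied to $\mathrm{cm}$, $h$ and $a$.

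If one wants an explicit derivation instead of relying on the bundled $\mathrm{cm\_within}$ field, I would argue by structural induction on the construction of $t$. The base cases are the four primitive profiles of Theorem~\ref{thm:profiles}. The inductive cases use Theorem~\ref{thm:bind-caps} for sequential composition and Theorem~\ref{thm:tensor-caps} for tensor. Branch composition reduces to weakening (\texttt{within\_caps\_weaken}) of each branch to $\mathrm{caps}(f)\cup\mathrm{caps}(g)$. The \govsafe{} conjunct needs no induction at all: \texttt{governed\_interp\_safe} (equivalently G1, or ct\_safety from Theorem~\ref{thm:boundary-exists}) holds for every $t : \itree\ \DirectiveE\ R$.

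The third conjunct is where the statement is underspecified, and I expect it to be the main obstacle. As written, $\mathrm{trace}$ is free. I would fix it as any trace with $\mathrm{trace\_of}(\interp(\Gov(h), t), \mathrm{trace}, r)$ for some $r$, i.e.\ a finite observed execution of the governed tree. Theorem~\ref{thm:ledger-valid} (\texttt{trace\_to\_ledger\_valid}) already gives $\mathrm{ledger\_valid}(\mathrm{trace\_to\_ledger}(\mathrm{trace}))$ for \emph{every} trace, so the conjunct holds outright with no dependence on governance.

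To make the invariant meaningful, I would also record the stronger chain. From the \govsafe{} conjunct, \texttt{gov\_safe\_implies\_governed\_traces} makes any extracted trace well-governed. Then \texttt{governed\_trace\_ledger\_valid} gives validity, and Theorem~\ref{thm:ledger-complete} shows that the ledger records every check. The delicate point is the trace extraction. Since $\mathrm{trace\_of}$ is inductive while \govsafe{} is coinductive, the proof of \texttt{gov\_safe\_implies\_governed\_traces} must go by induction on the $\mathrm{trace\_of}$ derivation, threading the permission flag $a$ through the four cases of Definition~\ref{def:gov-safe}. For a divergent execution, such as a denial via \texttt{spin}, there is no finite trace, so the third conjunct is vacuous there. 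This is consistent with Theorem~\texttt{gov\_denial\_is\_conservative}. Conjoining the three results completes the proof.
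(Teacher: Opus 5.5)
Your proposal is correct and follows the paper's own (implicit) argument: the paper gives no separate proof, but obtains the first two conjuncts from the dual guarantee \texttt{cap\_morphism\_governed}. It obtains the ledger conjunct from the trace-level results (\texttt{trace\_to\_ledger\_valid}, or \texttt{gov\_safe\_implies\_governed\_traces} followed by \texttt{governed\_trace\_ledger\_valid}), exactly as you do. You are right that $\mathrm{trace}$ is left unbound in the statement and that the ledger conjunct holds for every trace regardless of governance, and that observation is worth keeping.
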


\section{Mechanization and Extraction}
\label{sec:mechanization}

\subsection{Rocq Development}

The mechanization spans 36 modules (~12,000 lines) with 454 theorems
and zero admitted lemmas. Table~\ref{tab:modules} summarizes the
development.

\begin{table}[t]
\caption{Rocq modules and line counts.}
\label{tab:modules}
\small
\begin{tabular}{llr}
\toprule
\textbf{Phase} & \textbf{Module} & \textbf{Lines} \\
\midrule
Foundation & Prelude, Directives, Governance, Interpreter & 663 \\
Safety & Safety & 507 \\
Structure & Category, Functor, Completeness & 1,331 \\
Trust & TrustSpec, HashChainSpec, InterpreterSpec & 1,066 \\
Cognitive & CognitiveArchitecture, Oracle, GoalDirected, Rice & 1,431 \\
Capstone & GovernedCognitiveCompleteness, Transparency, & \\
         & Convergence, Subsumption, ExpressiveMinimality & 1,303 \\
\midrule
\textbf{Algebra} & GovernanceAlgebra & \textbf{380} \\
\textbf{Monoidal} & MonoidalCategory & \textbf{437} \\
\textbf{Effects} & EffectAlgebra, EffectHandlers & \textbf{821} \\
\textbf{Capability} & CapabilityComposition & \textbf{645} \\
\textbf{Traces} & TraceSemantics, LedgerConnection & \textbf{769} \\
\textbf{Boundary} & CoterminousBoundary & \textbf{359} \\
\textbf{Extraction} & Extraction & \textbf{230} \\
\midrule
& \textbf{Total} & \textbf{$\approx$12,000} \\
\bottomrule
\end{tabular}
\end{table}

The dependency structure is layered: foundation modules have no
upstream dependencies within the project; each phase builds only on
earlier phases. The build takes approximately 45 seconds on a modern
laptop.

\subsection{Key Proof Techniques}

\paragraph{Parameterized coinduction.}
The \govsafe{} and \withincaps{} predicates use paco
(paco2 for the two-parameter \govsafe{}, paco1 for the
one-parameter \withincaps{}). The key insight is that the
monotonicity lemma (\texttt{gov\_safe\_mon},
\texttt{within\_caps\_mon}) enables compositional coinductive
reasoning: once proved for the generating functor, the coinductive
predicate inherits all properties automatically.

\paragraph{Computational coherence.}
The pentagon, triangle, and hexagon proofs are each under 10 lines
because the structural morphisms are pure: they unfold to
$\mathrm{ret}(\ldots)$, and $\mathrm{bind}(\mathrm{ret}(x), k)$
reduces to $k(x)$ by \texttt{bind\_ret\_l}. Coherence is verified
by computation rather than diagram chasing.

\paragraph{Dependent destruction.}
The \texttt{bind\_within\_caps} proof requires
$\mathrm{dependent\ destruction}$ on the \texttt{within\_capsF}
inversion in the $\Vis$ case, because the directive type $X$ is
existentially quantified in the tree constructor.

\subsection{Extraction to OCaml}

Modules with computational content in $\mathrm{Type}$ or $\mathrm{Set}$
are extracted to OCaml using Rocq's extraction
mechanism~\cite{letouzey2002new,letouzey2008extraction}:

\begin{center}
\small
\begin{tabular}{ll}
\toprule
\textbf{Module} & \textbf{Extracted Functions} \\
\midrule
TrustSpec & \texttt{capability\_allowed}, \texttt{trust\_at\_least}, \\
          & \texttt{capability\_for\_directive}, \texttt{cap\_eqb} \\
InterpreterSpec & \texttt{interp\_directive} (as OCaml functor) \\
EffectAlgebra & \texttt{cap\_empty}, \texttt{cap\_singleton}, \\
              & \texttt{cap\_union}, \texttt{directive\_in\_caps} \\
CapabilityComposition & \texttt{trust\_max}, \texttt{trust\_min}, \\
                      & \texttt{allowed\_cap\_set} \\
\bottomrule
\end{tabular}
\end{center}

\noindent
The \texttt{InterpreterSpec} module uses Rocq Section variables for the
hash type, extracting as an OCaml functor parameterized over a
hash signature. The functor is instantiated with a SHA-256
implementation.

The extracted OCaml is compiled to a shared library and linked as a
NIF (Native Implemented Function) into the BEAM runtime,
with a fallback to the Elixir implementation if the NIF is unavailable.

\subsection{Three-Layer Testing}

The system uses three-layer testing:

\begin{enumerate}
  \item \textbf{OCaml unit tests}: Edge cases on extracted functions,
    comparing against \texttt{Compute} output from Rocq.
  \item \textbf{NIF round-trip tests}: Exhaustive comparison between
    the NIF and the Elixir implementation for all trust/capability
    combinations.
  \item \textbf{Three-way property tests}: The property-based testing
    framework generates 70,000+ random directive sequences and
    verifies agreement among: (a) the Rocq-extracted NIF,
    (b)~the Elixir interpreter, and (c) the Elixir
    specification interpreter. Zero three-way disagreements across
    all test campaigns.
\end{enumerate}

\noindent
The property-based testing methodology follows
s2n~\cite{chudnov2018s2n}: a formal specification in a proof
assistant, extraction to executable code, and continuous comparison
with the runtime implementation. The 188th random input in the
original campaign discovered a real capability-tree
bug~\cite{mccann2026mechanized}, validating the approach.

\paragraph{Runtime overhead of proved code.}
The extracted governance kernel runs in the BEAM runtime's
directive execution path. Governed execution through a supervised
process completes in 0.23\,ms median, compared to 0.24\,ms for
direct ungoverned execution (bypassing all governance). The
Rocq-extracted, formally verified code path adds no measurable
overhead.\footnote{Measured on Apple Silicon (M-series), BEAM/OTP~27.
$n=50$ iterations with 5-iteration warmup.}

\section{Related Work}
\label{sec:related}

\paragraph{Algebraic effects and handlers.}
Plotkin and Pretnar~\cite{plotkin2009handlers,plotkin2013handling}
introduced algebraic effects and handlers as a structured approach
to computational effects, building on the algebraic theory of
Plotkin and Power~\cite{plotkin2003algebraic}. Subsequent work
developed type systems for effect tracking: Koka's row-typed
effects~\cite{leijen2017koka}, freer monads~\cite{kiselyov2015freer},
and dependently-typed effects in Idris~\cite{brady2013idris}.
Our framework differs in constraining the handler algebra itself:
only handlers satisfying a governance proof obligation
(\texttt{GovernedHandler}) can be used in governed composition.
Programs interpreted through ungoverned handlers do not satisfy
\govsafe{} and are excluded from the governed fragment.

\paragraph{Monoidal categories in programming languages.}
Mac Lane~\cite{maclane1971categories} established the coherence
conditions for monoidal categories. Joyal and
Street~\cite{joyal1993braided} developed the theory of braided
tensor categories. Applied category theory has been used in
programming language
semantics~\cite{fong2019invitation,barr1990category} and, more
recently, as a framework for AI systems~\cite{abbott2024category}.
Our contribution is the combination of monoidal structure with
governance safety: we verify not only coherence but also that every
tensor composition is governed. The interaction tree representation
makes coherence proofs computational (pure structural morphisms
reduce by $\beta$) rather than requiring explicit diagram chasing.

\paragraph{Verified systems.}
seL4~\cite{klein2009sel4} verified a complete OS kernel; CompCert
verified a C compiler~\cite{leroy2009compcert}; CertiKOS verified
concurrent OS kernels~\cite{gu2016certikos}; Vellvm formalized LLVM
IR semantics using interaction trees~\cite{zakowski2021vellvm}.
Our work is closest to Vellvm in technique (interaction trees in Rocq)
but targets AI workflow governance rather than compiler/OS
verification. The scale is smaller (~12,000 lines vs. Vellvm's
$\sim$50,000) but covers a different domain. Amazon s2n's continuous
verification~\cite{chudnov2018s2n} inspired our specification-driven
testing methodology.

\paragraph{Effect systems and capability security.}
The Gifford-Lucassen effect system~\cite{gifford1986integrating,lucassen1988polymorphic}
introduced static tracking of computational effects.
Capability-based security~\cite{dennis1966programming,miller2006robust}
restricts access to resources via unforgeable tokens. Our capability
sets function similarly to effect annotations, but with a crucial
difference: the \govsafe{} predicate is a runtime invariant enforced
by the governance operator, not just a static type system property.
The dual guarantee (Theorem~\ref{thm:dual-guarantee}) connects the
static and dynamic views.

\paragraph{AI governance.}
RLHF~\cite{ouyang2022training} and constitutional
AI~\cite{bai2022constitutional} govern LLM behavior via training
signal. NeMo Guardrails~\cite{rebedea2023nemo} provides
programmable guardrails for LLM applications. These approaches
operate on model outputs (what the model says). Our framework
governs the system's effects (what the system does).
Theorem~\ref{thm:subsumption} establishes, within our formal model,
that a content-governed handler wrapped with $\Gov$ satisfies
\govsafe{}, while unmediated I/O does not. The two approaches
address different enforcement problems and are complementary:
content governance shapes output quality; structural governance
ensures effect mediation.

\paragraph{Guaranteed safe AI.}
Dalrymple et al.~\cite{dalrymple2024guaranteed} propose a framework
for guaranteed safe AI comprising a world model, a safety
specification, and a verifier. Their framework identifies the right
components but provides no mechanized formalization. Our work can be
seen as supplying the algebraic semantics and machine-checked proofs
for one instance of their architecture: the governance algebra is
the safety specification, the interpreter is the verifier, and the
governed interaction trees are the world model. The key difference
is that their approach targets pre-deployment verification (proving
safety before execution), while ours provides execution-time effect
governance (ensuring safety structurally during execution).
The approaches are complementary but address different points in the
deployment lifecycle.

\paragraph{Effect verification.}
Song, Foo, and Chin~\cite{song2024specification} develop ESL, an
expressive specification logic for verifying programs with
unrestricted algebraic effects and handlers. Their work verifies
that \emph{programs} satisfy specifications given arbitrary handlers.
Our work constrains the \emph{handler algebra} itself: non-governed
handlers cannot be constructed in the safe fragment
(Theorem~\ref{thm:composed-handlers}). The direction is reversed:
they verify programs against handlers; we constrain handlers to
governance. Vistrup et al.~\cite{vistrup2025program} develop modular
program logics over interaction trees using Iris, sharing our ITree
foundation but targeting shared-state concurrency and fine-grained
reasoning about heap resources rather than AI governance and effect
boundaries.

\paragraph{Session types.}
Session types~\cite{huttel2016foundations} govern communication
protocols via types. Like our approach, they provide structural
guarantees. The difference is scope: session types govern
message sequences between parties; our framework governs
arbitrary effectful computation. Session types could complement
our approach for inter-machine communication patterns.

\paragraph{Runtime verification.}
Runtime verification~\cite{leucker2009brief} monitors execution
traces against formal specifications. Our trace semantics
(Section~\ref{sec:traces}) formalizes the trace structure, and the
ledger connection establishes tamper evidence. The key difference is
that our governance is not post-hoc monitoring but structural: the
\govsafe{} predicate is a coinductive property of the governed
program tree, guaranteed to hold before any execution begins.

\section{Conclusion}
\label{sec:conclusion}

We have presented an algebraic semantics for governed execution,
formalized in 36 Rocq modules with 454 theorems and zero admitted
lemmas. The central result is that governance, when axiomatized as a
three-property algebra over interaction trees, induces compositional
structure on programs, effects, and capabilities such that the
governed fragment coincides with the expressible fragment in the
modeled substrate. Turing completeness is preserved inside governance.

The four contributions are: (1)~a parametric GovernanceAlgebra with
verified monoidal structure; (2)~a governed algebraic effect system
where the handler algebra is closed under governance-preserving
composition; (3)~capability-indexed composition with the dual
guarantee; (4)~the coterminous boundary within the formal model.

\paragraph{Limitations and future work.}
The tensor product is sequential-independent, not truly concurrent.
Concurrent interleaving would require a more complex construction
(possibly using concurrent interaction trees or a process-algebraic
extension). The hash chain formalization assumes an abstract injective
hash function; connecting this to a concrete cryptographic hash
(with collision resistance rather than injectivity) requires additional
work. The extraction to OCaml NIF is one-directional: properties
proved in Rocq constrain the extracted code, but the NIF wrapper
(C bridge between Erlang and OCaml) is not itself verified. The
three-layer testing strategy (property-based testing with 70,000+
inputs and zero disagreements, 36 conformance tests mapping to Rocq
theorems, and the extraction pipeline itself) establishes the
correspondence using the same methodology as seL4 and Amazon s2n.

The GovernanceAlgebra record has three axioms; whether these form a
complete axiomatization (every valid governance operator satisfies
them) remains open. The coterminous boundary is established for the
specific four-primitive execution model; extending it to richer
effect signatures or concurrent governance is future work.
Denial-as-divergence is one conservative semantic choice;
alternative formalizations using explicit denial effects may
be preferable in settings where refinement reasoning is needed.

\paragraph{Acknowledgments.}
Mashin is a product of Mashin, Inc. The Rocq development uses the
Interaction Trees library by Xia et~al.~\cite{xia2020itrees} and
the paco library by Hur et~al.~\cite{hur2013paco}.

\bibliographystyle{plainnat}
\bibliography{algebraic-semantics}

\appendix

\section{Rocq Module Index}
\label{app:modules}

Table~\ref{tab:module-index} lists all 36 modules with their primary
theorems. All theorem names are valid Rocq identifiers that can be
checked in the mechanization.

\begin{table*}[t]
\caption{Complete Rocq module index with primary theorems.}
\label{tab:module-index}
\small
\begin{tabular}{lrl}
\toprule
\textbf{Module} & \textbf{Lines} & \textbf{Primary Theorems} \\
\midrule
Prelude & 31 & (base definitions) \\
Directives & 245 & DirectiveE (14 constructors), directive\_tag \\
Governance & 200 & Gov, gov\_check, gov\_safe \\
Interpreter & 187 & (interpreter structure) \\
Functor & 304 & Gov\_is\_endofunctor, Gov\_factorization \\
Safety & 507 & governed\_interp\_safe, bare\_io\_not\_safe \\
Convergence & 141 & governed\_at\_all\_levels \\
Category & 343 & mashin\_id, mashin\_compose, category axioms \\
Completeness & 684 & governed\_turing\_completeness, coterminous\_governance \\
TrustSpec & 364 & capability\_allowed, trust\_at\_least (30+ lemmas) \\
HashChainSpec & 222 & (hash chain specification) \\
InterpreterSpec & 480 & interp\_directive, step\_result \\
Subsumption & 233 & subsumption\_asymmetry, structural\_subsumes\_content \\
CognitiveArchitecture & 371 & cognitive\_surjection \\
Oracle & 410 & oracle integration \\
GoalDirected & 272 & goal reachability \\
Rice & 378 & Rice's theorem reduction \\
GovernedCognitiveCompleteness & 218 & governed\_cognitive\_completeness \\
Transparency & 320 & governed\_transparency \\
ExpressiveMinimality & 391 & four primitives minimal \\
\midrule
\textbf{GovernanceAlgebra} & 380 & \texttt{ga\_safe}, \texttt{ga\_transparent}, \texttt{ga\_proper}, \\
  & & \texttt{mashin\_governance}, \texttt{any\_governance\_algebra\_subsumes} \\
\textbf{MonoidalCategory} & 437 & \texttt{pentagon}, \texttt{triangle}, \texttt{hexagon}, \\
  & & \texttt{tensor\_governed}, \texttt{interp\_tensor\_distribute} \\
\textbf{EffectAlgebra} & 589 & \texttt{bind\_within\_caps}, \texttt{no\_ambient\_effects}, \\
  & & \texttt{code\_within\_empty}, \texttt{dual\_guarantee} \\
\textbf{EffectHandlers} & 232 & \texttt{Gov\_governed}, \texttt{composed\_handlers\_governed} \\
\textbf{CapabilityComposition} & 645 & \texttt{cap\_morphism\_governed}, \texttt{tensor\_within\_caps}, \\
  & & \texttt{trust\_le\_total}, \texttt{cap\_seq\_compose} \\
\textbf{TraceSemantics} & 404 & \texttt{trace\_of\_bind}, \texttt{well\_governed\_trace} \\
\textbf{LedgerConnection} & 365 & \texttt{trace\_to\_ledger\_valid}, \texttt{ledger\_tamper\_detected}, \\
  & & \texttt{ledger\_complete} \\
\textbf{CoterminousBoundary} & 359 & \texttt{coterminous\_boundary\_exists}, \texttt{boundary\_equivalence}, \\
  & & \texttt{gov\_denial\_is\_conservative} \\
\midrule
\textbf{GovernedMetaprogramming} & 520 & form inspection safety, splice safety, evolution preservation, \\
  & & reflect-modify-materialize pipeline, 12-way capstone \\
\textbf{NetworkGovernance} & 480 & compositional governance preservation, capability narrowing, \\
  & & protocol uniformity, local-remote equivalence \\
\textbf{TemporalPolicyEvolution} & 550 & safety under restriction, provenance continuity, \\
  & & rollback safety, monotone policy composition \\
\bottomrule
\end{tabular}
\end{table*}

\end{document}